\documentclass{article}

\usepackage{microtype}
\usepackage{graphicx}
\usepackage{subcaption}
\usepackage{booktabs} 
\usepackage{threeparttable} 
\usepackage{multirow}
\usepackage{wrapfig}
\usepackage{tabularx} 
\usepackage{hyperref}

\usepackage[accepted]{icml2026}

\usepackage{amsmath}
\usepackage{amssymb}
\usepackage{mathtools}
\usepackage{amsthm}
\usepackage{wrapfig}
\usepackage{booktabs}

\usepackage[capitalize,noabbrev]{cleveref}

\theoremstyle{plain}
\newtheorem{theorem}{Theorem}[section]

\theoremstyle{definition}

\theoremstyle{remark}
\newtheorem{remark}[theorem]{Remark}

\usepackage[textsize=tiny]{todonotes}

\icmltitlerunning{Brep2Shape: Boundary and Shape Representation Alignment via Self-Supervised Transformers}

\begin{document}

\twocolumn[
  \icmltitle{Brep2Shape: Boundary and Shape Representation Alignment via\\ Self-Supervised Transformers}


  \icmlsetsymbol{equal}{*}

  \begin{icmlauthorlist}
    \icmlauthor{Yuanxu Sun}{equal,software}
    \icmlauthor{Yuezhou Ma}{equal,software}
    \icmlauthor{Haixu Wu}{software}
    \icmlauthor{Guanyang Zeng}{software}
    \icmlauthor{Muye Chen}{software}
    \icmlauthor{Jianmin Wang}{software}
    \icmlauthor{Mingsheng Long}{software}
  \end{icmlauthorlist}

  \icmlaffiliation{software}{School of Software, BNRist, Tsinghua University.
  Yuanxu Sun $<$sunyuanx22@mails.tsinghua.edu.cn$>$}
  
  \icmlcorrespondingauthor{Mingsheng Long}{mingsheng@tsinghua.edu.cn}
  \icmlcorrespondingauthor{Haixu Wu}{wuhaixu98@gmail.com}

  \icmlkeywords{Machine Learning, ICML}

  \vskip 0.3in
]



\printAffiliationsAndNotice{}  

\begin{abstract}
Boundary representation (B-rep) is the industry standard for computer-aided design (CAD). While deep learning shows promise in processing B-rep models, existing methods suffer from a representation gap: continuous approaches offer analytical precision but are visually abstract, whereas discrete methods provide intuitive clarity at the expense of geometric precision. To bridge this gap, we introduce Brep2Shape, a novel self-supervised pre-training method designed to align abstract boundary representations with intuitive shape representations. Our method employs a geometry-aware task where the model learns to predict dense spatial points from parametric B\'ezier control points, enabling the network to better understand physical manifolds derived from abstract coefficients. To enhance this alignment, we propose a Dual Transformer backbone with parallel streams that independently encode surface and curve tokens to capture their distinct geometric properties. Moreover, the topology attention is integrated to model the  interdependencies between surfaces and curves, thereby maintaining topological consistency. Experimental results demonstrate that Brep2Shape offers significant scalability, achieving state-of-the-art accuracy and faster convergence across various downstream tasks. Code is available at this repository: https://github.com/thuml/Brep2Shape.
\end{abstract}

\section{Introduction}
Computer-aided design (CAD) serves as the cornerstone of modern engineering and manufacturing, providing the fundamental framework for designing everything from consumer products to spacecrafts~\cite{heidari2025geometric}. The industry standard for these designs, Boundary representation (B-rep), encodes geometry through a complex interplay of precise parametric geometries and topological relationships \cite{weiler1986topological}. Traditionally, analyzing B-rep models has been a labor-intensive process requiring extensive domain expertise. Recently, deep models have emerged as promising tools for processing B-rep models. Benefiting from their impressive non-linear modeling capacity, deep models have demonstrated remarkable potential on specialized understanding tasks, such as classification and segmentation \cite{jayaraman2021uv,zou2025bringing}.

Beyond their prevalence, B-reps present unique challenges for deep models. Historically, topological relations in B-reps have been predominantly encoded through graph-based paradigms. Moreover, regarding the underlying geometry, as illustrated in Figure \ref{fig:intro}(a), existing methods generally follow two paradigms: \emph{continuous methods} and \emph{discrete methods}. Continuous methods, such as BRT \cite{zou2025bringing}, leverage \emph{boundary representations} to achieve analytical precision via parametric control points. However, these opaque inputs are often decoupled from the concrete spatial space and formalized by functions, making them hard to capture shape properties. Conversely, discrete methods, represented by UV-Net \cite{jayaraman2021uv} and BrepNet \cite{lambourne2021brepnet}, rely on \emph{shape representations} achieved by spatial sampling for perceptual intuition. While more accessible, they are {imprecise} and sensitive to discretization artifacts, sacrificing the high-fidelity essential to this domain. The representation gap remains a major obstacle to generalizable B-rep learning, raising a  question: \emph{how can we bridge this gap to learn a representation that is both mathematically rigorous and geometrically intuitive?}

\begin{figure*}[t]
\begin{center}
\centerline{\includegraphics[width=\textwidth]{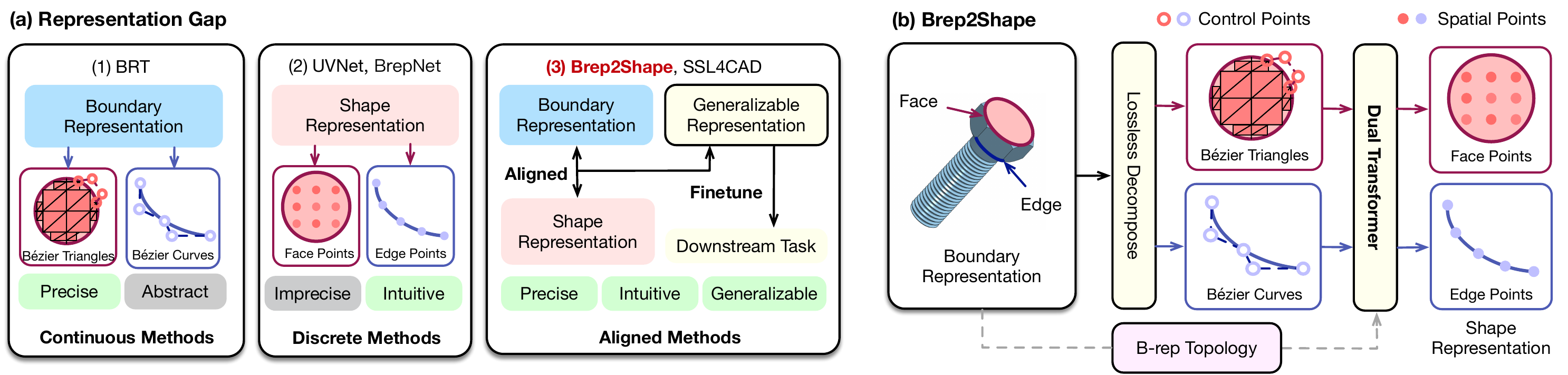}}
	\caption{(a) Representation gap between continuous methods and discrete methods. (b) Overview of Brep2Shape. Our self-supervised pre-training task aligns \emph{precise} expressions with \emph{intuitive} geometries to learn \emph{generalizable} representations for downstream tasks.}
	\label{fig:intro}
\end{center}
\vspace{-25pt}
\end{figure*}

To bridge this gap, we embrace a shift toward \emph{aligned methods}, as shown in Figure \ref{fig:intro}(a). This approach is powered by \emph{self-supervised pre-training}, mirroring its success in other domains~\cite{brown2020gpt, he2022mae}. However, establishing such alignments for B-reps remains a formidable challenge. While initial efforts like SSL4CAD~\cite{jones2023self} have made preliminary strides, they often struggle with over-simplified geometries and lack the universality for complex B-reps. The primary barrier lies in the inherent \emph{heterogeneity} and \emph{abstraction} of raw B-rep data. Specifically, the varying number of control points precludes direct integration with standard deep models, and the parametric expressions remain decoupled from spatial intuition, hindering the deep model's perception of actual spatial occupancy.

Based on these insights, we propose \emph{Brep2Shape}, a novel self-supervised pre-training method designed to align abstract boundary representations with intuitive shape representations. To ensure mathematical integrity and generalization, Brep2Shape operates directly on parametric B-rep entities. As shown in Figure \ref{fig:intro}(b), we address data heterogeneity by decomposing arbitrary entities into a sequence of {standardized B\'ezier primitives}. These primitives, defined by a fixed-size control point set, provide a structured yet continuous input space conducive to scalable pre-training. Through Brep2Shape, the model learns to map these opaque expressions into intuitive shape representations by predicting dense spatial points sampled from surfaces and curves. To support this task, we introduce \emph{Dual Transformer}, which employs parallel streams to independently encode face and edge tokens, preserving their distinct geometric properties. We present the \emph{topology attention} via incorporating a {topological attention bias} derived from the B-rep adjacency graph and its topological duality. By injecting these priors, the model maintains topological consistency and captures interdependencies between entities. Empirically, Brep2Shape demonstrates remarkable scalability and yields highly generalizable representations during pre-training, outperforming state-of-the-art task-specific models in downstream tasks with superior accuracy and faster convergence. Overall, our
contributions can be summarized as follows:

\vspace{-5pt}
\begin{itemize}
    \item We propose Brep2Shape, a novel self-supervised pre-training method that predicts shape representations directly from boundary representations, aligning {precise} analytical expressions with intuitive geometries.
    \item Tailored to our pre-training task, we propose a Dual Transformer backbone that processes surface and curve tokens in parallel and incorporates a {topology attention} to capture the interdependency between B-rep entities.
    \item Brep2Shape demonstrates its scalability and efficacy via large-scale pre-training, achieving state-of-the-art accuracy and faster convergence on downstream tasks.
\end{itemize}

\paragraph{Conflict of Interest Disclosure} Conflict of Interest Disclosure. The authors declare no financial conflicts of interest related to this work.

\section{Related Work} 

\subsection{Self-Supervised Pre-Training}

The self-supervised pre-training paradigm has revolutionized deep learning by unlocking the immense potential of large-scale unlabeled datasets. In NLP and CV, models~\cite{devlin2018bert,radford2021learning,ho2020denoising} such as GPT \cite{radford2018improving, radford2019language, brown2020gpt} and DINO \cite{caron2021emerging,oquab2024dinov2, simeoni2025dinov3} have demonstrated that pre-training on vast, non-annotated corpora allows the capture of generalizable contextual and visual representations. However, the B-rep learning community has yet to fully exploit the wealth of unlabeled CAD data residing in industrial repositories. UV-Net \cite{jayaraman2021uv} adopts a classical contrastive learning framework \cite{chen2020simple} to extract global shape-level embeddings. While effective for holistic tasks like classification, the pre-training task lacks fine-grained supervision for local geometries. Thus, the learned representations often struggle with intricate tasks such as segmentation. \citeauthor{jones2023self} introduced a geometry-driven reconstruction task in which the model learns to rasterize each B-rep face. However, the method is restricted to B-rep models with fixed-size parameterizations and even specific surface types, failing to generalize to diverse geometries. Further, prior methods lack the flexibility to exploit the vast unlabeled datasets in industrial repositories. In contrast, \emph{Brep2Shape} supports arbitrary B-rep models and establishes a universal alignment between opaque parametric entities and intuitive spatial shapes. By obviating the need for manual annotation, our framework enables a transition from task-specific learning to a scalable, self-supervised paradigm that captures robust and generalizable geometric representations.

\subsection{Learning on B-reps} 

B-rep is the de facto standard for 3D modeling in engineering \cite{ansaldi1985geometric}. However, unlike those discrete data domains \cite{guo2025deepseek,kirillov2023segment,wu2024transolver,zhouunisolver,ma2025physense}, B-reps represent 3D models through continuous parametric entities and topological relations, posing significant challenges for standard deep learning paradigms \cite{he2016deep,brown2020gpt}. Therefore, prior works focus on developing specialized backbones for this domain, aiming to seamlessly integrate geometric precision with topological connectivity.

Recent works on direct B-rep learning commonly rely on GNNs to model topology, following the paradigm introduced by \citeauthor{cao2020graph}. While their initial work was restricted to planar faces, subsequent methods have sought to accommodate more general surface types. This is typically achieved by those discrete methods, discretizing parametric geometry into 2D grids~\cite{jayaraman2021uv,colligan2022hierarchical}. Beyond surface discretization, several works incorporate handcrafted attributes, such as face types, face areas, and edge length to enrich node representations, including BRepGAT ~\cite{lee2023brepgat}, BRepMFR ~\cite{zhang2024brepmfr}, and AAGNet ~\cite{wu2024aagnet}. While effective, these approaches often depend on pre-discretization or domain-specific representations, which require careful tuning and may limit their generality across different B-reps.

Further, Transformers \cite{vaswani2017attention}, as a vital cornerstone of deep learning, have been applied to learn on B-reps. BRT \cite{zou2025bringing} near-losslessly encodes entities into Bézier primitives and leverages a Transformer to model their relations, while concurrently employing a hierarchy of RNNs and Transformers to capture the multi-level topological constraints. However, the complex hierarchy may limit its scalability \cite{kaplan2020scaling}. In contrast, our framework adopts a pure-transformer backbone, augmenting the topology attention to capture interdependencies between faces and edges, offering superior expressivity.

\subsection{Geometric Representations for B-reps}

Geometric modeling primarily employs Bézier, B-splines and non-uniform rational B-splines (NURBS) to define curves and surfaces, offering varying levels of flexibility and precision~\cite{patrikalakis2002shape}. Bézier entities are defined by control points and are suitable for local geometric primitives. Moreover, B-splines generalize Bézier entities by combining several Bézier entities via knot vectors, offering better flexibility and locality. However, as polynomials, B-splines cannot exactly represent complex shapes like conic sections. Further, NURBS extend B-splines by introducing rational weights, enabling them to represent nearly all types of geometries, making them an industrial standard for high-fidelity  modeling. However, the complexity of NURBS, with varying control points, knot vectors, and weights, makes them difficult to use directly in deep models. Therefore, we decompose NURBS into a sequence of Bézier primitives with a fixed degree, enabling consistent processing across different B-rep entities~\cite{piegl2012nurbs}. More details can be found in Appendix \ref{sec:geodetail}.

\section{Method}

To bridge the gap between abstract analytical
expressions with intuitive geometries, we introduce Brep2Shape, a self-supervised task that maps boundary representations to shape representations. Leveraging a novel Dual Transformer backbone that injects topological priors, Brep2Shape yields representations that align analytical fidelity with the semantic flexibility required for various downstream tasks.

\vspace{-10pt}
\paragraph{Problem setup}
Formally, a B-rep model is defined as a tuple $\mathcal{B} = (\mathcal{G}, \mathcal{E})$, where $\mathcal{G}$ denotes the topological structure and $\mathcal{E}$ represents the geometric entities. 
The geometric entity set $\mathcal{E} = \{f_1, \dots, f_{N_f}, e_1, \dots, e_{N_e}\}$ is composed of $N_f$ surfaces and $N_e$ curves, each parameterized as a NURBS entity. 
The topology $\mathcal{G}$ is represented by a \emph{face graph} that captures the adjacency and boundary relationships between faces and edges (e.g., which edges bound a specific face). Our objective is to learn generalizable representations that integrate geometric precision and topological context.

\subsection{Brep2Shape}
\paragraph{Boundary representaion} The inherent heterogeneity of NURBS data, with varying control points, knot vectors, and weights, makes them difficult to use directly in deep models. Therefore, we follow established geometric processing protocols and decompose each geometry entity into a sequence of B\'ezier primitives, using $n_f$ Bézier triangles for surfaces and $n_e$ Bézier segments for curves, each with a fixed number of control points $n$. Each control point is represented by its spatial coordinates and associated NURBS weight, producing a structured control point set $\{\mathcal{P}^i_{f_j}\}_{i=1}^{n_f}$ for the $j$-th face and $\{\mathcal{P}^i_{e_k}\}_{i=1}^{n_e}$ for the $k$-th edge, where $\mathcal{P}^i_x \in \mathbb{R}^{n\times4}$ denotes the control point matrix of the $i$-th primitive within entity $x$. Formally, we define $\mathcal{P}$ as the representation encompassing all primitives derived from all entities, $\mathcal{P} = \text{Decompose}({\mathcal{E}}) = \{ \{\mathcal{P}^i_{f_j}\}_{i=1}^{n_f} \}_{j=1}^{N_f} \cup \{ \{\mathcal{P}^i_{e_k}\}_{i=1}^{n_e} \}_{k=1}^{N_e}.$

\begin{figure}[H]
\begin{center}
\vspace{-10pt}
\centerline{\includegraphics[width=\columnwidth]{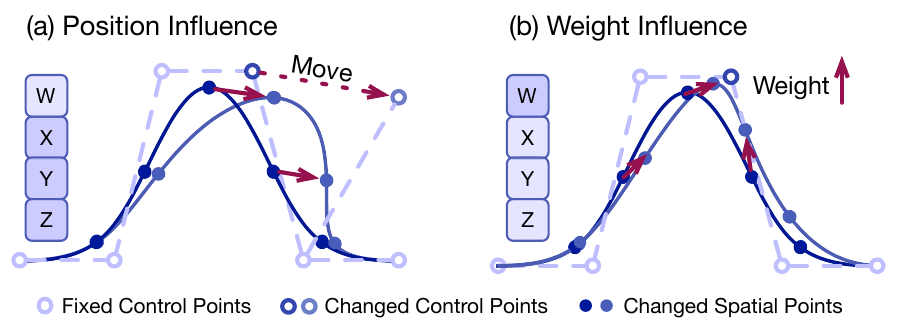}}
    \vspace{-5pt}
	\caption{(a) Position and (b) weight modifications of a single control point lead to opaque spatially-varying shape deformations.}
	\label{fig:influ}
\end{center}
\vspace{-30pt}
\end{figure}

\begin{figure*}[t]
\begin{center}
\centerline{\includegraphics[width=\textwidth]{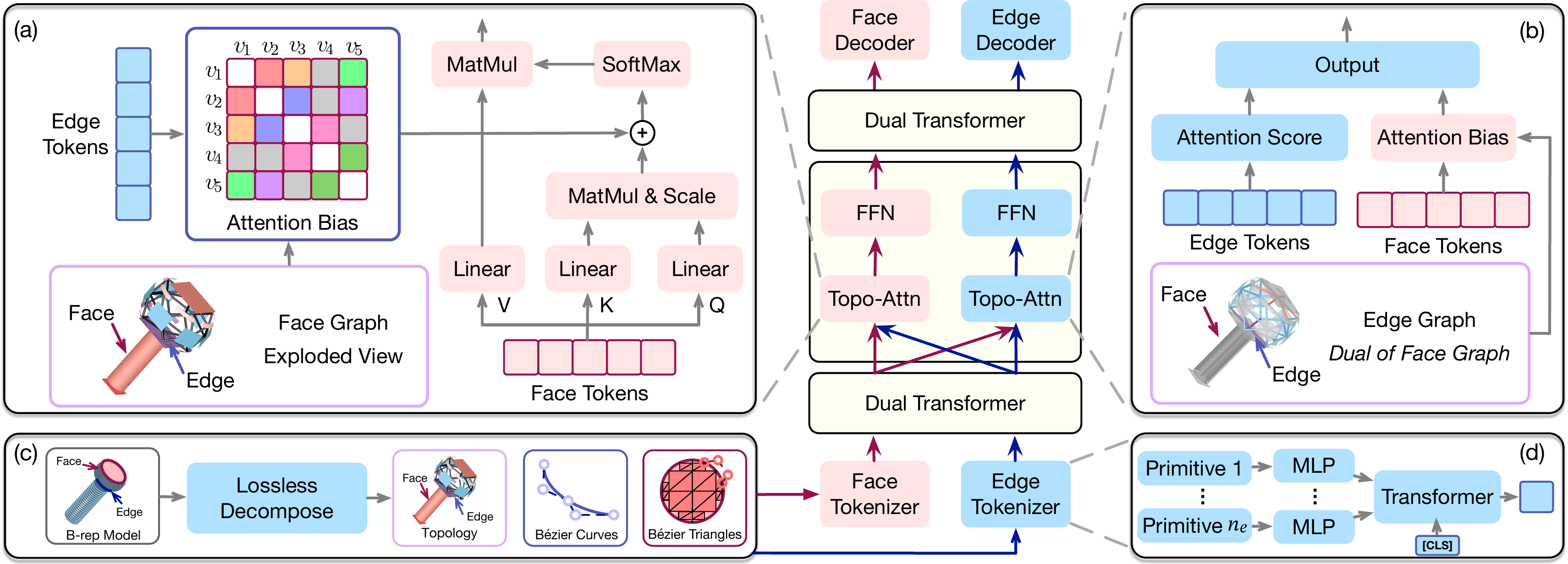}}
	\caption{Overview of Dual Transformer. (a-b) Topology Attention: Face and edge streams incorporate cross-stream geometric features as attention biases to encode topological adjacency. (c) Decomposition: Entities in B-rep models are near-losslessly decomposed into Bézier primitives. (d) Tokenization: Hierarchical aggregation of B\'ezier primitives into entity-level embeddings (face or edge tokens). 
	\label{fig:main_arch}}
\end{center}
\vspace{-25pt}
\end{figure*}
While this decomposition preserves mathematical exactness, as shown in Figure \ref{fig:influ}, the resulting control point set is not an explicit geometric description. Unlike point clouds or meshes, it should be interpreted as the coefficients of a polynomial basis rather than points lying on the geometry. The final entity is obtained after complex local basis combination and global weight-induced normalization. Consequently, these control points remain fundamentally opaque, making them largely decoupled from direct geometric intuition.

\vspace{-5pt}
\paragraph{Shape representation} We introduce a spatial point-based \emph{shape representation} derived from the discretization of the parametric domain. Viewed as a continuous mapping, a B-rep \emph{entity} projects a structured parametric domain into 3D space. Therefore, we perform uniform sampling within the parameter domain, evaluating the analytical mapping to yield 3D coordinates $\mathcal{U}_{f_j}$ for the $j$-th face and $\mathcal{U}_{e_k}$ for the $k$-th edge. Here, $\mathcal{U}_x \in \mathbb{R}^{m\times3}$ denotes a fixed-size spatial point set of the entity $x$, where $m$ is achieved via padding for both faces and edges. This representation serves as an explicit geometric counterpart to the opaque control points, facilitating a more accessible understanding of the entity's global shape and spatial occupancy. Formally, we define $\mathcal{U}$ as the representation encompassing all primitives derived from all entities, $\mathcal{U} = \text{Sample}({\mathcal{E}})=\{ \mathcal{U}_{f_j} \}_{j=1}^{N_f} \cup \{\mathcal{U}_{e_k} \}_{k=1}^{N_e}.$

\vspace{-5pt}
\paragraph{Pre-training task} 
Based on the two complementary geometric representations, $\mathcal{P}$ and $\mathcal{U}$, we propose Brep2Shape as our core pre-training task. This task aims to learn the mapping from an abstract boundary representation to an intuitive shape representation, which can be formalized as

\begin{equation}
    \operatorname{min}_{\theta} \| \mathcal{U} - f_{\theta}(\mathcal{P}, \mathcal{G}) \|^2,
\end{equation}
where $f_{\theta}$ denotes the deep model and $\mathcal{G}$ represents the B-rep topology. Since the shape representation can be directly obtained through analytical geometric evaluation, this task requires no additional manual annotations and is therefore well-suited for self-supervised pre-training. Through this alignment, the model is forced to capture the complex relations between B-rep parameters and the geometries, thereby learning aligned representations with enhanced geometric intuition and generalization across downstream tasks.

\vspace{2pt}
\begin{remark}[\textbf{Why topology matters for Brep2Shape}] Topology encodes the boundary constraints and connectivity priors between entities beyond the local control points. In Brep2Shape, incorporating topology ensures that the model does not merely learn a collection of disjoint surfaces, but understands the \emph{geometric continuity} required at shared interfaces. Message passing across adjacent faces and edges further reconciles local analytic detail with global intuition.
\end{remark}
\vspace{2pt}
\begin{remark}[\textbf{Edge-level supervision as a cross-entity consistency signal}]In B-rep models, an edge serves as more than a 1D entity; it acts as a natural anchor for geometric consistency between adjacent faces. While face-level supervision aids in learning local manifold geometry, it is edge supervision that ensures spatial alignment at shared boundaries. In contrast, many existing methods focus solely on face-level supervision \cite{jones2023self}, and lack explicit information flow from faces to edges \cite{zou2025bringing}, making it challenging to capture consistency reliably.

\end{remark}

\subsection{Dual Transformer for B-reps}
Realizing effective alignment between boundary representation and shape representation necessitates a backbone that can capture the intrinsic relations between B-rep entities. Building upon encoding B-rep model into continuous tokens, we propose \emph{Dual Transformer} designed to maintain topological consistency across different representations. 

\vspace{-5pt}
\paragraph{B-rep tokenization}
The structural heterogeneity of B-rep entities precludes direct, uniform tokenization. To ensure a consistent input dimension, we follow BRT~\cite{zou2025bringing} by decomposing each entity into a sequence of B\'ezier primitives, utilizing $n_f$ primitives for surfaces and $n_e$ primitives for curves. Each B\'ezier primitive is initially projected into a latent space via an MLP. Subsequently, two independent transformer encoders are employed to model the intra-entity dependencies for faces and edges, respectively. By incorporating a learnable \texttt{[CLS]} token, the model aggregates the entire primitive sequence into a global entity embedding. This process can be formalized as,
\begin{equation*}
\begin{aligned}
\mathbf{x}^{0}_{{f}_j} &= \mathrm{Transformer}_{\theta_{f}}( [ {\texttt{[CLS]}}, \{\mathrm{MLP}_{\phi_{{f}}}(\mathcal{P}^i_{f_j})\}_{i=1}^{n_f}])_{\mathrm{\texttt{[CLS]}}}, \\
\mathbf{x}^{0}_{{e}_k} &= \mathrm{Transformer}_{\theta_{e}}([ \texttt{[CLS]}, \{\mathrm{MLP}_{\phi_{{e}}}(\mathcal{P}^i_{e_k})\}_{i=1}^{n_e}] )_{\mathrm{\texttt{[CLS]}}},
\end{aligned}
\end{equation*}
where $\mathcal{P}^i$ denotes the control point set of the $i$-th primitive, $j\in \{1,...,N_f\}$ and $k\in \{1,...,N_e\}$. Finally, the $\mathbf{x}^{0}_{f_j}$ and $\mathbf{x}^{0}_{e_k}$ serve as the inputs to the Dual Transformer backbone.

\vspace{-5pt}
\paragraph{Dual transformer backbone} 
Beyond individual geometries, a B-rep model is defined by the topological relations that bind separate entities into a coherent solid. Building upon those entity tokens, we propose \emph{Dual Transformer} to explicitly encode topological dependencies. To maintain scalability, Dual Transformer largely adheres to the standard transformer while bifurcating into two parallel streams, dedicated to processing face and edge tokens, respectively.

Within each stream, the standard self-attention is replaced by our proposed \emph{topology attention} module that injects topological priors as dynamic attention biases, as illustrated in Figure \ref{fig:main_arch}. In the face stream, we construct a \emph{face graph} $\mathcal{G}^f$ where nodes (faces) are connected via shared B-rep edges. The value of attention bias between two face tokens is explicitly derived by linearly projecting the embedding of their shared edge token. This mechanism guides the model to prioritize interactions between topologically adjacent faces. Symmetrically, the edge stream utilizes an \emph{edge graph} $\mathcal{G}^e$, i.e., the dual of the face graph, where B-rep edges act as nodes linked by shared faces. This  inter-stream coupling establishes a bidirectional flow of information, allowing the model to exploit the intrinsic duality of B-rep topology. Therefore, the backbone preserves fine-grained local geometry while remaining globally context-aware. For clarity, we summarize the  process as $\text{Topo-Attn}^f(\mathbf{x}_f, \mathbf{x}_e)$~for face tokens and $\text{Topo-Attn}^e(\mathbf{x}_e, \mathbf{x}_f)$ for edge tokens, respectively.

\vspace{-5pt}
\paragraph{Overall design} In summary, \emph{Dual Transformer} facilitates a unified representation of B-rep models by integrating explicit topological priors into a scalable transformer backbone. Suppose there are $L$ layers, as shown in Figure \ref{fig:main_arch},
the $l$-th layer of Dual Transformer can be formalized as follows:
\begin{equation}
\begin{split}
    \hat{\mathbf{x}}^{l}_{\star} &= \mathrm{Topo\text{-}Attn}^{\star} \left(\mathrm{LN}(\mathbf{x}^{l-1}_{\star}), \mathbf{x}^{l-1}_{\bar{\star}}\right) + \mathbf{x}^{l-1}_{\star}, \\
    \mathbf{x}^{l}_{\star} &= \mathrm{FeedForward}(\mathrm{LN}(\hat{\mathbf{x}}^{l}_{\star})) + \hat{\mathbf{x}}^{l}_{\star},
\end{split}
\end{equation}
where $l\in\{1,\cdots, L\}$ and ${\mathbf{x}}^{l}_f\in\mathbb{R}^{N_f \times C}$, ${\mathbf{x}}^{l}_e\in\mathbb{R}^{N_e \times C}$ are outputs of the $l$-th layer. Further, $\star \in \{f, e\}$ denotes the current stream~(face or edge) and $\bar{\star}$ denotes the complementary stream providing the attention bias. For \emph{pre-training}, we append task-specific MLP heads to the final-layer representations $\mathbf{x}^{L}_f$ and $\mathbf{x}^{L}_e$ to reconstruct the shape representation~(i.e., $\mathcal{U}_{f_j}$ for the $j$-th face). For \emph{downstream fine-tuning}, the pre-training heads are replaced by task-specific layers to produce final predictions for diverse CAD applications.

\begin{figure*}[h]
\begin{center}
\centerline{\includegraphics[width=\textwidth]{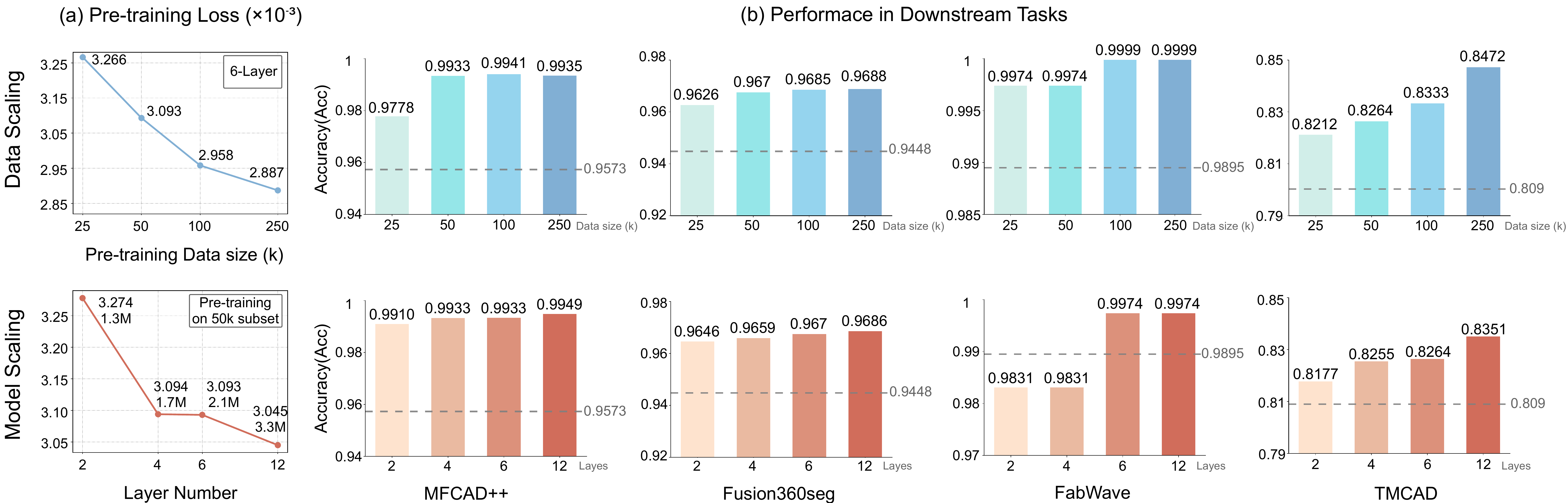}}
	\caption{Scaling behavior of Brep2Shape. Top: Scaling pre-training data from 25k to 250k with a fixed 6-layer Dual Transformer. Bottom: Scaling model size from 2 to 12 layers using a fixed 50k uniform subset. Dashed lines denote the performance of the state-of-the-art BRT baseline. Overall, increasing data scale or model size leads to consistent improvements in pre-training and downstream tasks. \label{fig:main_result}}
\end{center}
\vspace{-15pt}
\end{figure*}

\section{Experiments} 
We conduct experiments to evaluate the scalability and transferability of Brep2Shape. The pre-trained models are fine-tuned and evaluated on representative downstream tasks.

\vspace{-5pt}
\paragraph{Pre-training data} We collected and curated \emph{Brep2Shape-250k}, consisting of 250,000 B-rep models from multiple public and industrial sources. The dataset integrates several prominent benchmarks \cite{wu2021deepcad,lambourne2021brepnet,cao2020graph}. To further enhance geometric diversity, we incorporate FabWave~\cite{angrish2019fabsearch} and TMCAD~\cite{zou2025bringing} for their complex manufacturing geometries. Notably, Brep2Shape is trained solely on raw B-rep geometry, independent of any downstream task labels, making Brep2Shape-250k serve as a scalable foundation. More details can be found in Appendix \ref{sec:pretraindata}.

\vspace{-5pt}
\paragraph{Benchmarks}
As presented in Table~\ref{tab:dataset}, four widely-adopted benchmarks are selected for downstream evaluation, covering two main tasks. For the segmentation task, we utilize MFCAD++ (\citeyear{colligan2022hierarchical}) for machining feature segmentation, where faces are labeled according to machining features such as passages and slots. Fusion360Seg~(\citeyear{lambourne2021brepnet}) is used for semantic segmentation, labeling each face according to its modeling operation sequence, including extrude side, extrude end, and so on. For the classification task, FabWave~(\citeyear{angrish2019fabsearch}) focuses on fine-grained categorization across engineering parts, covering brackets and o-rings. TMCAD~(\citeyear{zou2025bringing}) focuses on industrial mechanical components, particularly standard parts such as bearings and flanges.

\begin{table}[h]
\vspace{-2pt}
\centering
\caption{Summary of datasets used for downstream tasks. Seg.\ and Cls.\ are short for segmentation and classification. $^*$ indicates that only valid files are retained from the original dataset. }
\label{tab:dataset}
\begin{small}
    
\setlength{\tabcolsep}{4.1pt}
\begin{tabular}{@{\extracolsep{\fill}}lcccc}
\toprule
 
Attributes & MFCAD++ & Fusion360
& FabWave & TMCAD$^*$  \\
\midrule
Tasks
& Seg.
& Seg. & Cls. & Cls. \\

Categories 
& 25
& 8 & 45 & 10\\

Mean Faces & 30& 15& 22& 75\\

Mean Edges & 157& 70& 111& 400\\

Data Size& 59,665& 35,858& 4,572& 7,599\\
\bottomrule
\end{tabular}
\end{small}
\vspace{-8pt}
\end{table}

\vspace{-5pt}
\paragraph{Implementations} 
During pre-training, Brep2Shape is optimized using AdamW (\citeyear{loshchilov2019decoupled}) with MSE loss for 100 epochs on an NVIDIA A100 GPU. In the fine-tuning stage, we switch to cross-entropy loss and train for another 100 epochs with the same optimizer. All baselines are trained for 350 epochs to adhere to their original settings. Performance is evaluated using Accuracy (Acc) for classification tasks, and both Accuracy and Intersection over Union (IoU) for segmentation tasks. See Appendix \ref{sec:imple} for more details. 

\subsection{Scaling Performance}
Scalability is an important property for representation learning. To investigate the scaling behavior of Brep2Shape, we pre-train it with increased data scale and model size, and evaluate their fine-tuned performance on downstream tasks.
\vspace{-15pt}
\paragraph{Data scaling} We investigate data scalability by pre-training Brep2Shape on increasingly larger subsets randomly sampled from the {Brep2Shape-250K}. In Figure \ref{fig:main_result}(a), the pre-training loss decreases as the data scale grows, with a huge reduction of 11.6\%. We then fine-tune the models on four downstream tasks. To avoid data leakage, no samples from the downstream task test sets are included in the pre-training corpus. As shown in Figure~\ref{fig:main_result}(b), downstream performance consistently improves with increasing pre-training data, with the average accuracy rising from 0.9379 to 0.9524, indicating effective transfer from pre-training. This data scaling behavior arises from our self-supervised objective, which enforces alignment between boundary representations and shape representations, jointly constrained by geometric coordinates and topological connectivity. Due to the diverse entities and rich compositional structure of B-rep models, increasing the pre-training data introduces novel information rather than redundant samples. This continually expands the effective supervision space, prevents early saturation, and enables consistent performance gains with larger data scale.
\vspace{-15pt}
\paragraph{Model scaling} Beyond data scaling, we progressively increase the number of Dual Transformer layers from 2 to 12, expanding the model parameters from 1.3M to 3.3M while fixing the pre-training dataset to 50k samples uniformly drawn from Brep2Shape-250K. As shown in Figure~\ref{fig:main_result}, increasing model capacity consistently reduces the pre-training loss and yields improvements on downstream tasks. This suggests that the Dual Transformer can effectively utilize additional capacity, benefiting from its scalable architecture with the topology attention that promotes structured information exchange between surfaces and curves.

To summarize, scaling both the pre-training data and model layers consistently reduces pre-training loss and leads to steady downstream gains, indicating that Brep2Shape exhibits favorable scaling behavior and can effectively leverage increased data and parameters to learn higher-quality geometric representations across diverse B-rep models, surpassing the state-of-the-art baseline, BRT, by a clear margin, with an average error reduction of up to 36.3\%.

\subsection{Main Results}
We compare Brep2Shape with several advanced baselines, including UV-Net, AAGNet, and BRT, across two widely-adopted tasks. For Brep2Shape, we utilize a 6-layer Dual Transformer pre-trained on the whole Brep2Shape-250k. To evaluate the efficiency and transferability of the pre-trained representations, we fine-tune them for only 100 epochs, compared to 350 epochs used for training the baselines.

\begin{table}[t]

\vspace{5pt}
\caption{Fine-tuning performance comparison of Brep2Shape. The best results are highlighted in \textbf{bold} and the second best is underlined. For classification, we report Acc; for segmentation, we report segmentation Acc and IoU.}
\label{tab:fine-tuning}
\centering

\begin{threeparttable}
\begin{small}
\setlength{\tabcolsep}{1.1pt}
\renewcommand{\arraystretch}{1.05}

\begin{tabular}{@{\extracolsep{\fill}}l|cc|cc|cc}
\toprule
\multirow{2}{*}{\vspace{-3pt}Method} 
& \multirow{2}{*}{\vspace{-3pt}FabWave} 
& \multirow{2}{*}{\vspace{-3pt}TMCAD} 
& \multicolumn{2}{c|}{{MFCAD++}} 
& \multicolumn{2}{c}{{Fusion360Seg}} \\
\cmidrule(lr){4-5}\cmidrule(lr){6-7}
&  &  
& Acc & IoU 
& Acc & IoU \\
\midrule
UV-Net (\citeyear{jayaraman2021uv})
& 92.68 & 77.87 
& 98.92 & 96.56 
& 89.03 & 66.47 \\
AAGNet (\citeyear{wu2024aagnet})
& 96.33 & 74.72 
& \underline{99.29} & \textbf{98.64} 
& 82.45 & 75.53 \\
BRT (\citeyear{zou2025bringing})      
& \underline{98.95} & \underline{80.90} 
& 95.73 & 89.98 
& \underline{94.48} & \underline{79.23} \\
\midrule
\textbf{Brep2Shape} 
& \textbf{99.99} & \textbf{84.72} 
& \textbf{99.35} & \underline{98.02}
& \textbf{96.88} & \textbf{83.77} \\
\bottomrule
\end{tabular}
\end{small}
\end{threeparttable}
\vspace{-15pt}
\end{table}
\vspace{-5pt}
\paragraph{Classification} Table~\ref{tab:fine-tuning} shows that Brep2Shape outperforms prior methods on the two classification benchmarks, FabWave and TMCAD. Moreover, discrete methods such as UV-Net and AAGNet perform reasonably well on simpler datasets like FabWave, but generalize poorly to more complex geometries such as TMCAD. By aligning abstract expressions with intuitive geometries as auxiliary guidance, Brep2Shape captures fine-grained geometric details together with the topological relationship among B-rep entities. Our learned representations demonstrate strong transferability, particularly on the more complex dataset TMCAD that features higher geometric complexity and varied topologies. In these challenging scenarios, Brep2Shape consistently outperforms existing methods, confirming its efficacy in handling sophisticated industrial components.

\vspace{-5pt}
\paragraph{Segmentation} 
On semantic segmentation benchmarks, Brep2Shape achieves superior performance on both MFCAD++ and Fusion360Seg under Acc and IoU metrics, as shown in Table~\ref{tab:fine-tuning}. Besides, AAGNet performs competitively on MFCAD++, benefiting from its handcrafted geometric features, but degrades on the more complex Fusion360Seg, while BRT shows the opposite trend. Benefiting from pre-training, Brep2Shape combines the precision of boundary representations with the intuition of shape representations, which is well suited for these face-level segmentation tasks. Further, since face semantics depend on geometric shape and the topological relations among neighboring faces (e.g., faces originating from the same extrusion operation), incorporating topology during pre-training enables Brep2Shape to generalize more effectively across these benchmarks.

\begin{figure}[h]
    \centering
    \includegraphics[width=\columnwidth]{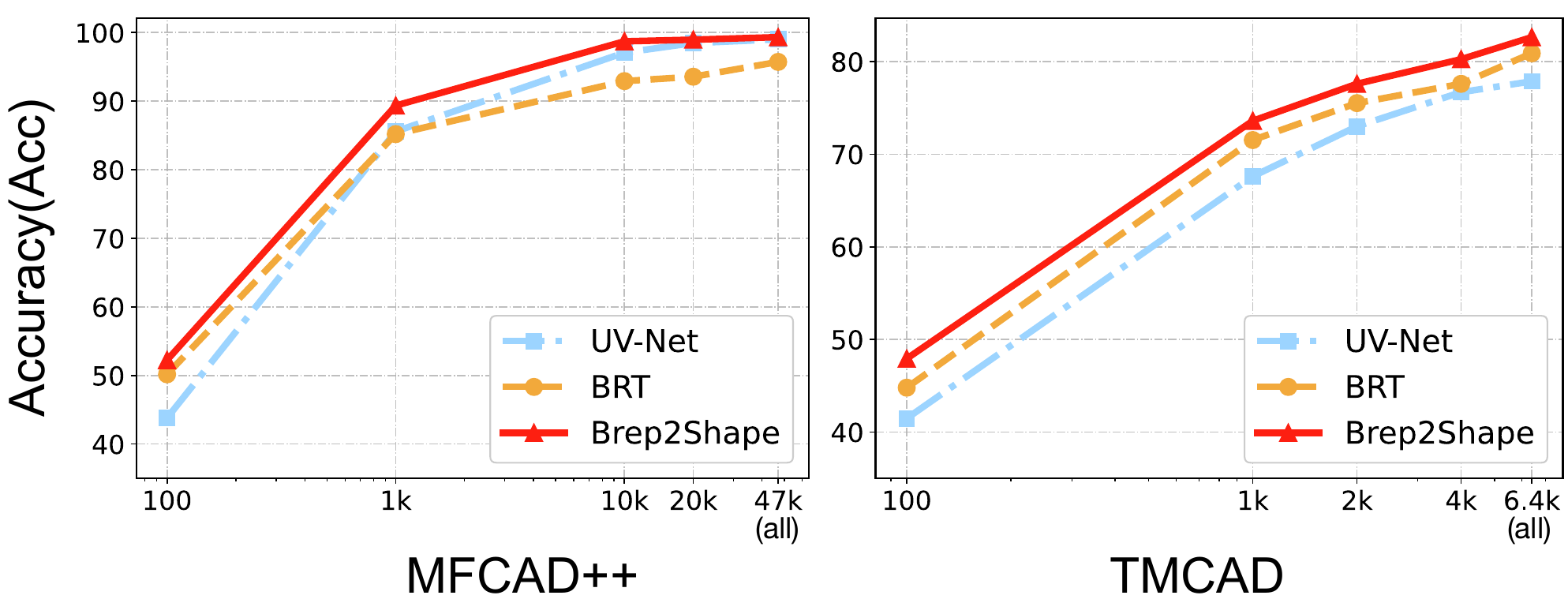}
    \caption{Limited data generalization analysis on MFCAD++ and TMCAD under varying numbers of labeled training samples. }

    \label{fig:data_efficiency}
    \vspace{-5pt}
\end{figure}

\vspace{-5pt}
\paragraph{Limited data generalization} We analyze the generalizability of Brep2Shape by fine-tuning the pre-trained model with \emph{limited} labeled downstream data in Figure \ref{fig:data_efficiency}. Brep2Shape adopts a 6-layer Dual Transformer pre-trained on a 50k subset and is fine-tuned for 100 epochs. Experiments are conducted on MFCAD++ and TMCAD. With only 100 labeled samples, Brep2Shape achieves 47.9\% accuracy, compared to 41.5\% for UV-Net and 44.8\% for BRT, indicating that Brep2Shape representations transfer effectively to downstream tasks. As labeled data increases, all methods benefit from additional supervision; however, Brep2Shape maintains stable performance and clear gains, suggesting it learns representations that are robust in low-label regimes. See Appendix \ref{tab:fullfordataeff} for the numerical results of Figure \ref{fig:data_efficiency}.

\subsection{Model Analysis}

\paragraph{Pre-training strategy ablations} We conduct detailed ablations to analyze different pre-training strategies. As shown in Table~\ref{tab:ablation_pre_training}, pre-training the two tokenizers only while removing the Dual Transformer leads to a much higher pre-training loss, indicating that interactions between different entities are critical for effective B-rep learning. We further evaluate the role of edge-level supervision by removing supervision signals on edges, which leads to a clear increase in face-level loss. By enforcing spatial consistency across adjacent faces, edge-level supervision facilitates more coherent geometric and topological learning during pre-training. Full downstream results for these pre-training strategy ablations are provided in Appendix~\ref{app:pretrain_strategy_ablation}, where the default Brep2Shape consistently achieves the best transfer performance.
\begin{table}[h]
\centering
\vspace{-10pt}
\caption{Ablations on different pre-training strategies. \emph{The pre-training loss is decomposed into face-level and edge-level components}, both reported in units of $10^{-3}$. Dual Trm. w/o Edge-Level Supervision denotes removing supervision signals on edges.}
\label{tab:ablation_pre_training}
\small
\setlength{\tabcolsep}{3pt}
\renewcommand{\arraystretch}{1.1}
\vspace{-3pt}
\begin{tabular}{@{\extracolsep{\fill}}l|c|c}
\toprule
Method & Face Loss & Edge Loss \\
\midrule
Pre-train Tokenizers Only 
& 2.142 & 1.287 \\
Dual Trm. w/o Edge Level Supervision
& 1.872 & -- \\
\textbf{Brep2Shape (Default)}
& \textbf{1.855} & \textbf{1.238} \\
\bottomrule
\end{tabular}
\vspace{-10pt}
\end{table}

\vspace{-5pt}

\paragraph{Finetuning strategy ablations} We evaluate different fine-tuning strategies for the pre-trained model, including linear probing, partial fine-tuning, and full fine-tuning. As shown in Table~\ref{tab:ablation_fine_tuning}, linear probing achieves competitive performance on TMCAD, indicating that the pre-trained representations are already highly transferable to downstream classification tasks, while its performance on MFCAD++ is limited due to the larger number of categories. Further fine-tuning the topological encoder (Partial FT) yields performance comparable to full fine-tuning, suggesting that complex interactions are better captured with the assistance of topological modeling.
\begin{table}[h]
\vspace{-5pt}
\centering
\small
\setlength{\tabcolsep}{38pt}
\caption{Ablation study on different fine-tuning strategies.\emph{Linear Probing} freezes the pre-trained backbone and trains only a task-specific prediction head.
\emph{Partial Fine-tuning (Partial FT)} fine-tunes the Dual Transformer while keeping the tokenizers frozen. \emph{Full Fine-tuning~(Full FT)} updates all model parameters.}
\label{tab:ablation_fine_tuning}
\small
\setlength{\tabcolsep}{3pt}
\renewcommand{\arraystretch}{1.1}
\vspace{-3pt}
\begin{tabular}{@{\extracolsep{\fill}}l|c|c|c}
\toprule
Method 
& TMCAD
& MFCAD++ (Acc) & MFCAD++ (IoU) \\
\midrule
Linear Probing
& 0.7915
& 0.8465 & 0.6601 \\
Partial FT 
& 0.8168
& 0.9930 & 0.9783 \\
\textbf{Full FT}  
& \textbf{0.8264} 
& \textbf{0.9934} & \textbf{0.9799} \\
\bottomrule
\end{tabular}
\vspace{-10pt}
\end{table}
\vspace{-5pt}

\paragraph{Backbone ablations} To validate the effectiveness of the Dual Transformer, we conduct ablations by varying backbone designs, including alternative cross-modal fusion strategies. Details of variants and results are shown in Table~\ref{tab:ablation_backbone}. As expected, GNN yields the weakest performance, suggesting that local message passing alone is insufficient for capturing global B-rep geometry. Face Trm. removes the edge stream and therefore loses boundary cues, while Dual Trm. w/ Std. Attn. keeps the dual-stream architecture but removes topology attention, leading to weaker segmentation performance. We further compare two stronger fusion variants. Face + Edge Trm. merges face and edge tokens into a single Transformer, and Q-Former introduces explicit cross-attention between the two streams. Although these variants achieve competitive classification accuracy on TMCAD, they underperform on MFCAD++ segmentation. This contrast suggests that classification mainly benefits from holistic shape-level aggregation, whereas segmentation requires preserving entity-specific face and edge semantics for precise local prediction. 
Thus, overly entangling face and edge features can blur geometric properties that are critical for face-level prediction. 
In contrast, the default Dual Transformer preserves separate face and edge streams while exchanging information through topology-aware attention biases, achieving the best overall performance.

\begin{table}[h]
\centering
\small
\setlength{\tabcolsep}{1.15pt}
\caption{Ablations on backbones. 
Trm.\ denotes Transformer. Q-Former fuses face and edge streams via query-based cross-attention. Face Trm. only keeps the face stream. Face + Edge Trm. treats faces and edges as nodes in a single Transformer over a complete graph.
Dual Trm. w/ Std. Attn. uses standard attention rather than topology attention. Pre-train loss is reported in units of $10^{-3}$. Accuracy / IoU are reported for MFCAD++.
\label{tab:ablation_backbone}}

\begin{tabular}{@{\extracolsep{\fill}}l|c|c|c}
\toprule
Backbone & TMCAD  & MFCAD++ & Pre-train Loss \\
\midrule
GNN 
& 0.8099 
& 0.9836 / 0.9522 
& 3.855 \\
Q-Former 
& 0.8244 
& 0.9863 / 0.9603 
& 3.158 \\
Face Trm. 
& 0.8182 
& 0.9907 / 0.9689
& 3.202 \\
Face + Edge Trm. 
& 0.8248 & 0.9818 / 0.9487 & 3.251\\
Dual Trm. w/ Std. Attn. & 0.8177 
& 0.9853 / 0.9582 
& 3.203 \\
\textbf{Dual Trm. (Default)} 
& \textbf{0.8264} 
& \textbf{0.9934 / 0.9799} 
& \textbf{3.093} \\
\bottomrule
\end{tabular}
\vspace{-8pt}
\end{table}

\vspace{-5pt}
\paragraph{Fine-tuning Dynamics} 
Table~\ref{tab:finetuning_dynamics} illustrates the fine-tuning dynamics of Brep2Shape on TMCAD under different training epochs. 
The accuracy increases consistently from 82.29\% at 50 epochs to 84.03\% at 350 epochs, indicating that the pretrained representation can be progressively adapted to the downstream classification task through longer optimization. 
Notably, the performance does not saturate quickly after the default 100-epoch setting, but continues to improve with additional fine-tuning. 
This suggests that fine-tuning mainly refines task-specific decision boundaries while preserving the general geometric knowledge learned during pretraining. 
The default configuration of 100 epochs achieves 82.64\% accuracy, offering a practical balance between efficiency and effectiveness, whereas extending training to 350 epochs brings a further 1.39 percentage-point improvement. 
The monotonic improvement also indicates stable optimization behavior, with no clear sign of overfitting in the tested range. 
Overall, the results show that Brep2Shape is robust to the fine-tuning schedule and can further benefit from longer downstream adaptation.

\begin{table}[h]
\centering
\small
\setlength{\tabcolsep}{3.5pt}
\caption{Effect of fine-tuning epochs on TMCAD accuracy (Acc). The default fine-tuning for Brep2Shape adopted in all other experiments (100 epochs) is highlighted in \textbf{bold}.}
\label{tab:finetuning_dynamics}
\begin{tabular}{@{\extracolsep{\fill}}l|c|c|c|c|c|c|c}
\toprule
\#Epoch 
& 50
& \textbf{100} & 150 & 200 & 250 & 300 & 350 \\
\midrule
TMCAD & 82.29 & \textbf{82.64} & 82.99 & 83.16 & 83.33 & 83.68 & 84.03 \\
\bottomrule
\end{tabular}
\vspace{-5pt}
\end{table}

\vspace{-5pt}
\paragraph{Domain transfer} Domain transfer evaluates whether models trained on one dataset can generalize to another dataset. To this end, we compare Brep2Shape with SSL4CAD~\cite{jones2023self}, a pre-training method for domain transfer. Both methods are pre-trained on Fusion360Seg and fine-tuned on Fusion360Seg and MFCAD; Brep2Shape is additionally pre-trained on a 25k subset to match the scale of Fusion360Seg. As shown in Table~\ref{tab:ssl4cad}, Brep2Shape achieves strong in-domain performance on Fusion360Seg and outperforms SSL4CAD on the target domain MFCAD. The improvement is most pronounced in the low-data regime, with an absolute gain of about 15\% using 100 samples, and remains stable with more labeled data. With the 25k subset pre-training, Brep2Shape reaches comparable in-domain performance to SSL4CAD while maintaining a clear advantage on MFCAD, suggesting that the learned representation captures shared geometric and topological patterns rather than dataset-specific cues. 
\begin{table}[h]
\centering
\caption{Domain transfer results on Fusion360Seg (in domain) and MFCAD (cross domain). Brep2Shape-Fusion is pre-trained on Fusion360Seg and Brep2Shape-25k uses a 25k pre-training subset.}
\label{tab:ssl4cad}
\small
\setlength{\tabcolsep}{6pt}

\begin{tabular}{@{\extracolsep{\fill}}l|c|c|c|c}
\toprule
{Task / Model} & \multicolumn{4}{c}{{Training Set Size / Accuracy}} \\
\midrule
{Fusion360Seg} & {1k} & {10k} & {20k} & {23k (all)} \\
\midrule
SSL4CAD (\citeyear{jones2023self})               & 0.91 & 0.95 & 0.96 & 0.96 \\
Brep2Shape-25k         & 0.8928 & 0.9518 & 0.9625 & 0.9626 \\
Brep2Shape-Fusion   & \textbf{0.9201} & \textbf{0.9604} & \textbf{0.9711} & \textbf{0.9716} \\
 \midrule
{MFCAD} & {100} & {1k} & {10k} & {14k (all)} \\
\midrule
SSL4CAD (\citeyear{jones2023self})           & 0.66 & 0.96 & 0.99 & 0.99 \\
Brep2Shape-25k         & \textbf{0.7626} & \textbf{0.9957} & \textbf{0.9999} & \textbf{0.9999} \\
Brep2Shape-Fusion   & 0.7362 & 0.9946 & \textbf{0.9999} & \textbf{0.9999} \\
\bottomrule
\end{tabular}
\vspace{-10pt}
\end{table}

\begin{figure*}[ht]
\begin{center}
\centerline{\includegraphics[width=\textwidth]{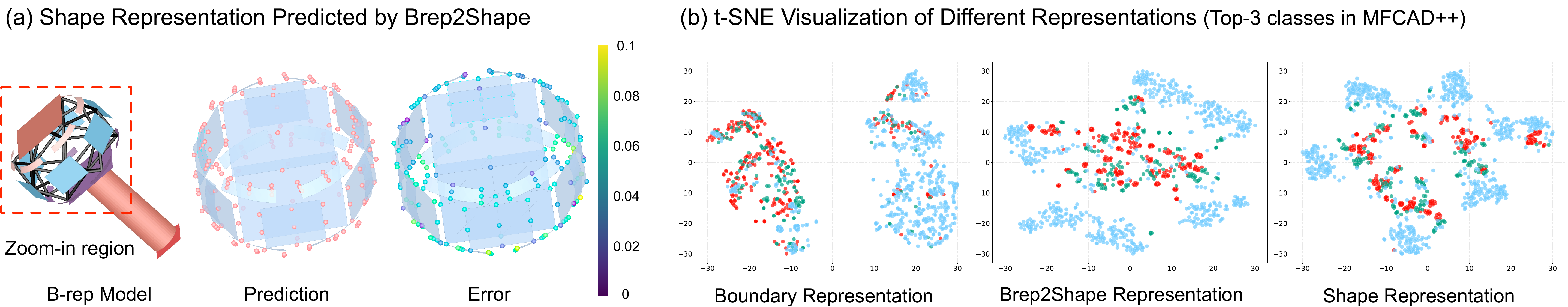}}
	\caption{(a) Exploded view of the screw (upper portion) is shown, including the raw B-rep model, the predicted shape representation, and the corresponding point-wise error map. (b) Each point represents a face and is colored according to the top-3 classes in MFCAD++.}
	\label{fig:vis_main}
\end{center}
\vspace{-10pt}
\end{figure*}
\begin{figure*}[ht]
\begin{center}
\vspace{-5pt}
\centerline{\includegraphics[width=\textwidth]{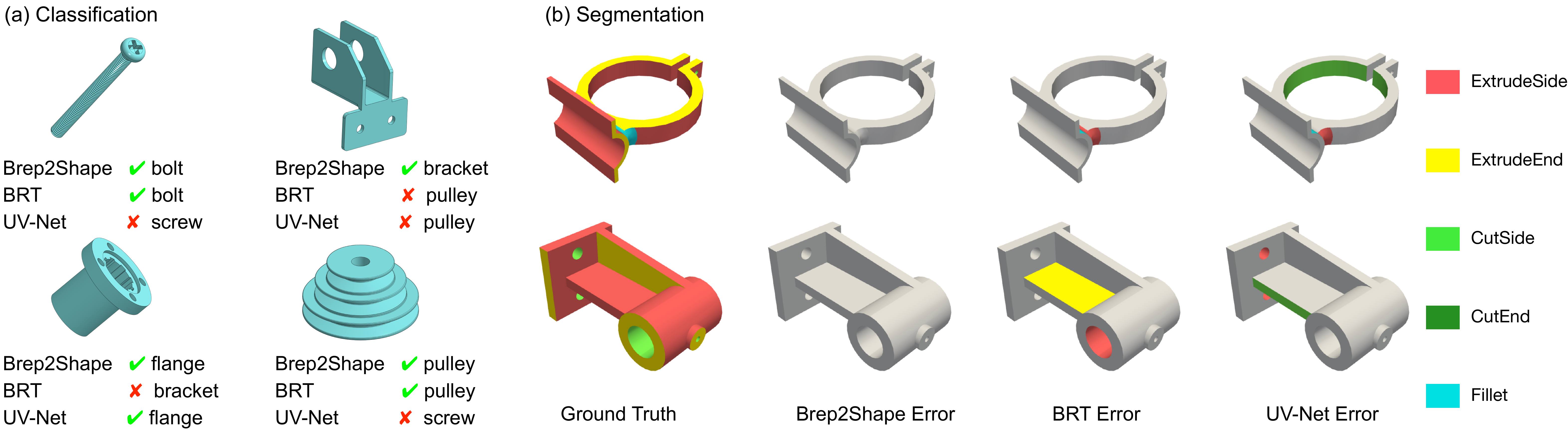}}
	\caption{Case study on different models. (a) Classification results on \emph{TMCAD}. (b) Segmentation results on \emph{Fusion360Seg}. Gray regions denote correct face predictions, while colored regions mark faces predicted incorrectly. See Appendix \ref{sec:additiona_visualizations} for more case studies. }
	\label{fig:case_sty}
\end{center}
\vspace{-19pt}
\end{figure*}

\vspace{-5pt}
\paragraph{Representation analysis} In Figure~\ref{fig:vis_main}(a), Brep2Shape accurately predicts planar coordinates, while curved surfaces are more challenging due to higher abstraction. Despite minor deviations, the overall shape is well reconstructed. Figure ~\ref{fig:vis_main}(b) presents t-SNE visualizations of different representations using the top-3 most frequent classes in MFCAD++. Compared to boundary representations, Brep2Shape shows clearer separation aligned with geometric intuition, leading to improved discriminability for downstream tasks. To complement the qualitative t-SNE visualization, we conduct two quantitative analyses in Table~\ref{tab:repr_analysis}. First, for cosine similarity analysis, we randomly select 20 faces and compute their cosine similarity to all other faces. For each selected face, we compare the label agreement rate among the top-10\% most similar and bottom-10\% least similar faces, and report the difference as the discriminability score. Second, for linear probing, we freeze each representation and train a single-layer MLP classifier. Brep2Shape achieves the highest discriminability score and linear probing accuracy, confirming that the proposed alignment objective yields more semantically meaningful feature spaces.

\begin{table}[h]
\centering
\vspace{-5pt}
\caption{Quantitative representation analysis on MFCAD++. Cosine similarity reports the label-agreement gap between the top-10\% most/least similar faces, while Linear Probe Acc. measures frozen-representation face-label accuracy with a single-layer MLP.}
\label{tab:repr_analysis}
\small
\setlength{\tabcolsep}{3pt}
\renewcommand{\arraystretch}{1.1}
\vspace{-3pt}
\begin{tabular}{@{\extracolsep{\fill}}l|c|c}
\toprule
Method & Cosine similarity& Linear probing\\
\midrule
Boundary Representation& 0.14& 0.7035\\
Shape Representation& 0.21& 0.7353\\
\textbf{Brep2Shape Representation}& 0.23& \textbf{0.8465}\\
\bottomrule
\end{tabular}
\vspace{-5pt}
\end{table}

\vspace{-5pt}
\paragraph{Fine-tuning showcases}

Figure~\ref{fig:case_sty} showcases the downstream performance of different models. For classification, Brep2Shape correctly recognizes these complex geometric structures, including bolts, brackets, flanges, and pulleys. These cases suggest that Brep2Shape can integrate structural cues beyond isolated local patterns. 
For example, UV-Net captures the local threaded geometry in the bolt case, but still misclassifies it as a screw, indicating that the distinction requires reasoning over more distant regions such as the head and bottom structure. Similarly, recognizing brackets and flanges depends on understanding the global arrangement of holes, supports, and cylindrical or planar components. This shows that Brep2Shape benefits from preserving both local geometric details and long-range topological dependencies. For segmentation, Brep2Shape also shows stronger capability in fine-grained geometric understanding. As shown in the top-row examples, BRT and UV-Net make errors around small adjacent faces and subtle boundary regions, where accurate prediction requires not only local surface recognition but also contextual reasoning over neighboring entities. The fewer errors produced by Brep2Shape indicate that its representation better captures precise local geometry together with coherent global CAD structure.

\section{Conclusions}
This paper presents {Brep2Shape}, a novel self-supervised pre-training method that aligns the fundamental gap between abstract boundary representations and intuitive shape representations. The alignment is achieved by a {Dual Transformer} backbone, which leverages {topology attention} to maintain structural consistency between face and edge streams. Extensive experiments demonstrate Brep2Shape yields generalizable representations through pre-training. When fine-tuned on downstream tasks, it remarkably outperforms state-of-the-art models in both accuracy and convergence speed.

\newpage

\section*{Impact Statement}
This paper introduces a novel self-supervised framework for boundary representation (B-rep), advancing the intersection of geometric modeling and representation learning. By bridging the gap between abstract parametric entities and intuitive shape geometry, our work has the potential to significantly reduce the dependency on large-scale labeled datasets in industrial design, thereby lowering the barrier for AI adoption in engineering and manufacturing. While this advancement contributes to the broader goal of intelligent CAD systems, we do not foresee any specific negative societal consequences that require immediate highlighting.
\section*{Acknowledgements}
This work was supported by Beijing Scholar Program and National Engineering Research Center for Big Data Software.
\bibliography{example_paper}
\bibliographystyle{icml2026}

\newpage
\appendix
\onecolumn

\section{Implementation Details}
\label{sec:imple}

\subsection{Datasets}

\label{sec:pretraindata}

\paragraph{Pre-training dataset}

\begin{wrapfigure}{r}{0.52\columnwidth}
\begin{center}
\vspace{-24pt}
\centerline{\includegraphics[width=0.46\columnwidth]{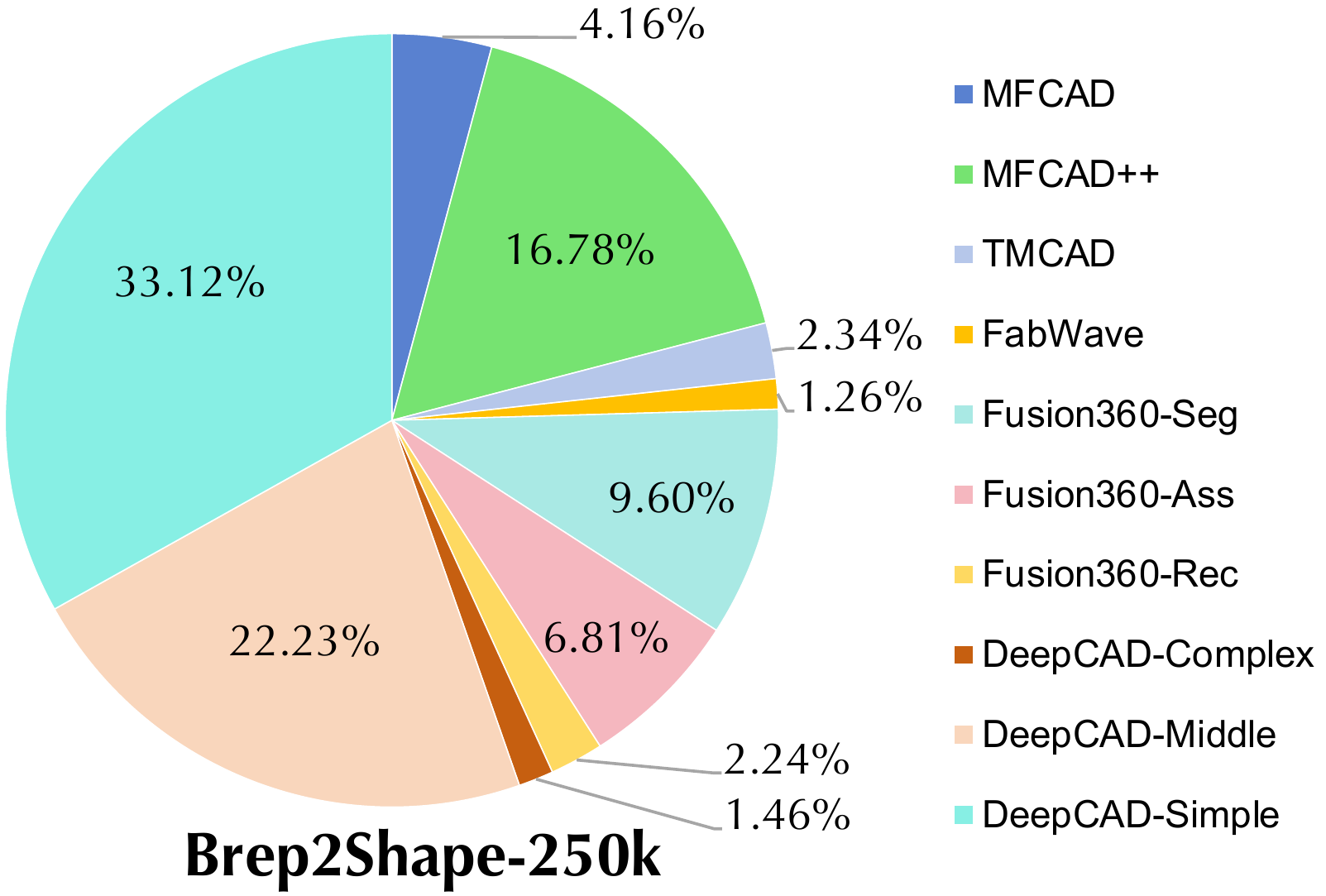}}
	\caption{Ratios of data sources in \emph{Brep2Shape-250k}, the pre-training corpora of Brep2Shape. Detailed statistics are provide in Table~\ref{tab:dataset_summary}.}
	\label{fig:pretrain_dataset}
\end{center}
\vspace{-25pt}
\end{wrapfigure}
Large-scale and diverse corpora are crucial for learning robust representations of B-rep solids, as real-world CAD models vary substantially in topology, local geometry, and modeling styles. To support our pre-training objective, we curate \emph{Brep2Shape-250k}, a corpus of 250,000 B-rep solids compiled from multiple public CAD datasets, covering both mechanical parts and shapes derived from modeling operations. \emph{Brep2Shape-250k} aggregates models from eight sources, including MFCAD ~\cite{cao2020graph}, MFCAD++ ~\cite{colligan2022hierarchical}, TMCAD ~\cite{zou2025bringing}, FabWave ~\cite{angrish2019fabsearch}, Fusion360Seg ~\cite{lambourne2021brepnet}, Fusion360Ass ~\cite{willis2022joinable}, Fusion360Rec ~\cite{willis2020fusion}, and DeepCAD \cite{wu2021deepcad}. For DeepCAD, we further group models into three complexity levels based on the number of faces, using 10, 30, and 60 as thresholds. Figure~\ref{fig:pretrain_dataset} visualizes the source distribution, and Table~\ref{tab:dataset_summary} reports the exact counts and ratios. The construction of this dataset required substantial engineering effort, involving \emph{tens of days of continuous processing on 30 CPU cores} to curate, clean, and standardize the data.

\begin{table}[h]
    \centering
    \caption{Key statistics of \emph{Brep2Shape-250k}, the pre-training dataset of Brep2Shape.}
    \setlength{\tabcolsep}{1.5pt}
    \label{tab:dataset_summary}
    \small
    \begin{tabular}{l|cccccccccc|c}
        \toprule
        \scalebox{0.8}{\textbf{Source}} & \scalebox{0.8}{MFCAD} & \scalebox{0.8}{MFCAD++} & \scalebox{0.8}{TMCAD} & \scalebox{0.8}{FabWave} & \scalebox{0.8}{Fusion360Seg} & \scalebox{0.8}{Fusion360Ass} & \scalebox{0.8}{Fusion360Rec} & \scalebox{0.8}{DeepCAD-Complex} & \scalebox{0.8}{DeepCAD-Middle} & \scalebox{0.8}{DeepCAD-Simple} & \scalebox{0.8}{\textbf{Total}} \\
        \midrule
        \scalebox{0.8}{\textbf{\#Models}} & \scalebox{0.8}{10811} & \scalebox{0.8}{43587} & \scalebox{0.8}{6066} & \scalebox{0.8}{3270} & \scalebox{0.8}{24924} & \scalebox{0.8}{18145} & \scalebox{0.8}{5813} & \scalebox{0.8}{3548} & \scalebox{0.8}{53754} & \scalebox{0.8}{80082} & \scalebox{0.8}{250000} \\
        \scalebox{0.8}{\textbf{Ratio}} &  \scalebox{0.8}{4.16\%} &  \scalebox{0.8}{16.78\%} &  \scalebox{0.8}{2.34\%} &  \scalebox{0.8}{1.26\%} &  \scalebox{0.8}{9.60\%} &  \scalebox{0.8}{6.81\%} &  \scalebox{0.8}{2.24\%} &  \scalebox{0.8}{1.46\%} & \scalebox{0.8}{ 22.23\%} &  \scalebox{0.8}{33.12\%} & \scalebox{0.8}{100\% }\\
        \bottomrule
    \end{tabular}
    \vspace{-10pt}
\end{table}

\paragraph{Downstream benchmarks} 

We extensively evaluate our model on four benchmarks. Note that these benchmarks involve classification and segmentation tasks for solid mode modeling.

\paragraph{FabWave} FabWave \cite{angrish2019fabsearch} focuses on fine-grained classification of engineering parts. It contains 4,572 B-rep models across 45 categories, covering common parts such as brackets, gears, and O-rings. The input is a B-rep model and the output is a category label.
The average numbers of edges and faces per model are \(\overline{N_e}=110.70\) and \(\overline{N_f}=22.10\), respectively.

\paragraph{TMCAD} TMCAD \cite{zou2025bringing} comprises more than 10,000 real-world mechanical design models collected from the Internet. 7599 processed and valid models are retained for classification, including 10 classes of standard parts such as bearings, couplings, and flanges.
The average numbers of edges and faces per model are \(\overline{N_e}=400.36\) and \(\overline{N_f}=75.44\), respectively.

\paragraph{MFCAD++} MFCAD++ \cite{colligan2022hierarchical} is designed for machining feature recognition, treating the problem as a face segmentation task. The dataset comprises 59,665 B-rep models. The goal is to identify geometric features such as passages, pockets, and slots, assigning one of 25 semantic labels to each face of the model.
The average numbers of edges and faces per model are \(\overline{N_e}=157.36\) and \(\overline{N_f}=29.91\), respectively.

\paragraph{Fusion360Seg} Fusion360Seg \cite{lambourne2021brepnet} facilitates B-rep semantic segmentation using data from the Fusion 360 Gallery Dataset. It includes 35,858 models. The objective is to label each face according to the modeling operation source, with 8 categories including Extrude Side, Extrude End, and Chamfer.
The average numbers of edges and faces per model are \(\overline{N_e}=69.59\) and \(\overline{N_f}=14.70\), respectively.

\subsection{Metrics}
\paragraph{Shape reconstruction loss}
In the pre-training phase, the model predicts the shape representation
\(\hat{\mathcal{U}} = f_{\theta}(\mathcal{P}, \mathcal{G})\) and $\hat{\mathcal{U}} = \{ \hat{\mathcal{U}}_{f_j} \}_{j=1}^{N_f} \cup \{\hat{\mathcal{U}}_{e_k} \}_{k=1}^{N_e}$
where each entity target is represented as a set of sampled 3D points. Specifically, for the $j$-th face and the $k$-th edge, the targets are
\(\mathcal{U}_{f_j} \in \mathbb{R}^{m \times 3}\) and
\(\mathcal{U}_{e_k} \in \mathbb{R}^{m \times 3}\), respectively, where padding is applied to ensure a uniform size $m$. Let $\Omega_{f_j}$ and $\Omega_{e_k} $ denote the sets of valid (non-padded) point indices for faces and edges, and $\mathcal{U}_{x}\left[p\right]$ ($x$ for $f_j$ or $e_k$) denotes the 3D points at index $p$.
We minimize the mean squared error (MSE) over valid points only. The overall pre-training loss is defined as:
\begin{equation}
\mathcal{L}_\text{pre} = \mathcal{L}_\text{face} + \mathcal{L}_\text{edge}.
\end{equation}
\begin{equation}
\mathcal{L}_\text{face} =
\frac{1}{N_f} \sum_{j=1}^{N_f}
\frac{1}{|\Omega_{f_j}|}
\sum_{p \in \Omega_{f_j}}
\left\| \hat{\mathcal{U}}_{f_j}\left[p\right] - \mathcal{U}_{f_j}\left[p\right] \right\|_2^2 ,
\end{equation}
\begin{equation}
\mathcal{L}_\text{edge} =
\frac{1}{N_e} \sum_{k=1}^{N_e}
\frac{1}{|\Omega_{e_k}|}
\sum_{p \in \Omega_{e_k}}
\left\| \hat{\mathcal{U}}_{e_k}\left[p\right] - \mathcal{U}_{e_k}\left[p\right] \right\|_2^2 ,
\end{equation}

\paragraph{Accuracy (Acc)}
We evaluate the performance of both classification and segmentation tasks using Accuracy.
It measures the proportion of correctly predicted samples (or faces in segmentation) out of the total number of samples $N$:
\begin{equation}
    \text{Acc} = \frac{1}{N} \sum_{i=1}^{N} \mathbb{I}(\hat{y}_i = y_i)
\end{equation}
where $\hat{y}_i$ is the predicted label and $y_i$ is the ground truth label. $\mathbb{I}(\cdot)$ is the indicator function.

\paragraph{Intersection over Union (IoU)}
For the segmentation task, we also employ the Intersection over Union (IoU) metric to assess the overlap between the predicted and ground truth regions for each class.
The IoU for a specific class $c$ is defined as:
\begin{equation}
    \text{IoU}_\text{c} = \frac{\text{Prediction}_\text{c} \cap \text{GroundTruth}_\text{c}}{\text{Prediction}_\text{c} \cup \text{GroundTruth}_\text{c}}
\end{equation}
and we report the mean IoU across all $C$ classes: $\text{IoU} = \frac{1}{\text{C}} \sum_{\text{c}=1}^{\text{C}} \text{IoU}_\text{c}$.

\subsection{Implementations}
Our models are implemented using the PyTorch framework~\cite{Paszke2019PyTorchAI} and trained on an NVIDIA A100 GPU.
The optimization process utilizes the AdamW optimizer~\cite{loshchilov2019decoupled} with a cosine annealing learning rate schedule.
For Brep2Shape, the initial learning rate is set to $1 \times 10^{-4}$ with a weight decay of $0.01$ for pre-training and $2\times 10^{-4}$ with the same weight decay for fine-tuning. Unless otherwise specified, the edge and face tokenizers each use a 3-layer Transformer with 4 attention heads and a hidden dimension of 128, while the Dual Transformer consists of 6 Transformer layers with 4 attention heads and a hidden dimension of 128.
Detailed hyperparameter configurations for all models are summarized in Table~\ref{tab:model_configs}.

\begin{table}[h]
    \centering
    \caption{Hyperparameter settings and configurations for different models. Transformer-based models (e.g., Brep2Shape and BRT) specify the number of attention heads, while UVNet and AAGNet do not use attention mechanisms and therefore have no head configuration (denoted by ``--''). $^{*}$ For Brep2Shape, the batch size is set to 16 during pre-training and 32 during fine-tuning. This is default config for Brep2Shape.}
    \setlength{\tabcolsep}{2pt}
    \label{tab:model_configs}
        \begin{tabular}{l|ccc|ccc|ccc|cc}
            \toprule
            \multirow{2}{*}{{Model}} & \multicolumn{3}{c|}{{Edge Tokenizer}} & \multicolumn{3}{c|}{{Face Tokenizer}} & \multicolumn{3}{c|}{{Dual Transformer}} & \multirow{2}{*}{{Batch Size}} & \multirow{2}{*}{{Epochs}}  \\
                 & Layers  & Heads   & Hidden Dim.& Layers    & Heads & Hidden Dim. & Layers & Heads & Hidden Dim. &     &     \\
            \midrule
            {Brep2Shape}             & 3       & 4       & 128         & 3         & 4     & 128  & 6      & 4     & 128  & 16*  & 100 \\
            BRT               & 2       & 4       & 64          & 2         & 4     & 64   & 2      & 8     & 64   & 16  & 350 \\
            UVNet            & 3       & -       & 64          & 3         & -     & 64   & 2      & -     & 128  & 128 & 350 \\
            AAGNet   & 1       & -       & 64          & 1         & -     & 64   & 3      & -     & 128  & 256 & 350 \\
            \bottomrule
        \end{tabular}
\end{table}

\paragraph{BRT}
BRT~\cite{zou2025bringing} adopts a hierarchical Transformer architecture consisting of 6 Transformer encoder layers, with 2 layers each for the edge, face, and global encoders.
The embedding dimension is set to 64, and the feed-forward network (FFN) uses a hidden dimension of 512.
A dropout rate of 0.25 is applied throughout the network.

\paragraph{UV-Net}
UV-Net \cite{jayaraman2021uv} The curve encoder consists of 3 1D convolutional layers with hidden dimensions of 64, 128, and 256, followed by adaptive average pooling and a fully connected layer outputting a 64-dimensional embedding.
Similarly, the surface encoder comprises 3 2D convolutional layers with an identical channel configuration. 

\paragraph{AAGNet}
AAGNet \cite{wu2024aagnet} utilizes linear networks to encode B-rep face and edge attributes (64-dim), followed by layer normalization.
It includes a CNN-based encoder for node grid features.
The core graph encoder processes these node and edge embeddings using 3 GNN layers.
For classification, we adapt the original segmentation design by extracting global graph features and passing them through a 2-layer MLP head with a hidden dimension of 64 and layer normalization.

\section{Additional Analysis}

\subsection{Sample Density Sensitivity}
For each B-rep entity, we obtain a spatial point-based shape representation by discretely sampling its parametric domain. The sampling density is controlled by the number of sampled points $m$ per primitive. In our default setting, Brep2Shape predicts $m=3$ points for each primitive. To evaluate the sensitivity of the model to sampling density, we conduct experiments with $m=1$ and $m=5$. All models are pre-trained on Brep2Shape-25K using a 6-layer Dual Transformer. The results are reported in Table~\ref{tab:sample_density}.
As $m$ increases, the pre-training loss rises, indicating higher prediction difficulty due to the increased output dimensionality. However, downstream performance on TMCAD consistently improves with larger $m$. This suggests that denser sampling enables the model to capture and localize geometric details more effectively. We therefore adopt $m=3$ as a balanced choice, providing sufficient geometric interpretability while maintaining stable training and strong downstream performance.
\begin{table}[h]
\centering
\small
\setlength{\tabcolsep}{67pt}
\caption{Sensitivity analysis with different sampling densities $m$. The default setting for all experiments ($m=3$) is highlighted in \textbf{bold}.
\label{tab:sample_density}}

\begin{tabular}{@{\extracolsep{\fill}}l|c|c}
\toprule
Sampling Density & TMCAD  & Pre-train Loss \\
\midrule
$m=1$ 
& 0.8125 
& 1.97 \\
\boldmath{$m=3$}
& \textbf{0.8212} 
& \textbf{3.226} \\
$m=5$
& 0.8247
& 3.793 \\
\bottomrule
\end{tabular}
\vspace{-10pt}
\end{table}

\subsection{Full Results for Data Efficiency Analysis}
\label{tab:fullfordataeff}
Table~\ref{tab:efficiency_analysis} reports the full numerical results of the data efficiency analysis corresponding to Figure ~\ref{fig:data_efficiency}, providing exact accuracy values on MFCAD++ and TMCAD across different labeling regimes.

\begin{table}[h]
\centering
\small
\setlength{\tabcolsep}{8pt}
\caption{Data efficiency analysis on MFCAD++ and TMCAD under varying numbers of labeled training samples.}
\label{tab:efficiency_analysis}
\begin{tabular}{@{\extracolsep{\fill}}l|ccccc|ccccc}
\toprule
& \multicolumn{5}{c|}{\textbf{MFCAD++}} 
& \multicolumn{5}{c}{\textbf{TMCAD}} \\
\cmidrule(lr){2-6} \cmidrule(lr){7-11}
\textbf{Method}
& 100 & 1k & 10k & 20k & 47k(full)
& 100 & 1k & 2k & 4k & 6.4k(full) \\
\midrule
UV-Net
& 0.4381 & 0.8559 & 0.9708 & 0.9848 & 0.9902
& 0.4147 & 0.6761 & 0.7301 & 0.7670 & 0.7787 \\
BRT
& 0.5017 & 0.8520 & 0.9293 & 0.9357 & 0.9573
& 0.4479 & 0.7153 & 0.7552 & 0.7760 & 0.8090 \\
Brep2Shape
& \textbf{0.5224} & \textbf{0.8936} & \textbf{0.9873} & \textbf{0.9897} & \textbf{0.9934}
& \textbf{0.4792} & \textbf{0.7361} & \textbf{0.7760} & \textbf{0.8026} & \textbf{0.8264} \\
\bottomrule
\end{tabular}
\vspace{-10pt}
\end{table}

\subsection{Full Results for Pre-training Strategy Ablations}
\label{app:pretrain_strategy_ablation}
Table~\ref{tab:full_pretrain_strategy_ablation} reports the full results of different pre-training strategies. In addition to the pre-training losses, we evaluate the resulting representations on downstream tasks to examine whether lower reconstruction loss translates into better transferability. Pre-training only the tokenizers removes the Dual Transformer during pre-training, while the variant without edge-level supervision removes the edge prediction loss. Both components contribute to downstream performance: removing the Dual Transformer leads to a clear drop on TMCAD, and removing edge-level supervision also degrades transfer performance. These results confirm that both global topological encoding and edge-level spatial alignment are important for learning generalizable B-rep representations.

\begin{table}[h]
\caption{Full results for pre-training strategy ablations. Face Loss and Edge Loss are reported in units of $10^{-3}$. TMCAD reports classification accuracy, while MFCAD++ reports face segmentation Acc./IoU.}
\label{tab:full_pretrain_strategy_ablation}
\centering
\setlength{\tabcolsep}{18.5pt}
\begin{tabular}{lcccc}
\toprule
Method & Face Loss & Edge Loss & TMCAD & MFCAD++ \\
\midrule
Pre-train Tokenizers Only
& 2.142 & 1.287 & 80.40 & 99.16 / 97.56 \\
Dual Trm. w/o Edge-Level Sup.
& 1.872 & -- & 81.43 & 99.18 / 97.56 \\
Brep2Shape
& \textbf{1.855} & \textbf{1.238} & \textbf{82.64} & \textbf{99.33 / 98.02} \\
\bottomrule
\end{tabular}
\vspace{-8pt}
\end{table}

\subsection{Statistical Analysis}

Table \ref{tab:statistical_analysis} reports fine-tuning results of Brep2Shape under three different random seeds. Overall, performance is highly stable across runs on both classification and segmentation benchmarks. In particular, FabWave and MFCAD++ exhibit negligible variance, with classification Acc and segmentation Acc/IoU remaining nearly identical across seeds. The largest fluctuation is observed on Fusion360Seg, where IoU varies slightly across runs, indicating that this dataset is comparatively more sensitive to optimization randomness. Despite this, the overall trends and competitive performance remain consistent, suggesting that Brep2Shape fine-tuning is robust to seed variation.
\begin{table}[h]
\caption{Performance of Brep2Shape across three random seeds. We report Acc for classification and Acc/IoU for segmentation.}
\label{tab:statistical_analysis}
\centering
\begin{threeparttable}
\begin{small}
\setlength{\tabcolsep}{19pt}
\renewcommand{\arraystretch}{1.05}

\begin{tabular}{@{\extracolsep{\fill}}l|cc|cc|cc}
\toprule
\multirow{2}{*}{Seed} 
& \multirow{2}{*}{FabWave} 
& \multirow{2}{*}{TMCAD} 
& \multicolumn{2}{c|}{{MFCAD++}} 
& \multicolumn{2}{c}{{Fusion360Seg}} \\
\cmidrule(lr){4-5}\cmidrule(lr){6-7}
&  &  
& Acc & IoU 
& Acc & IoU \\
\midrule
Seed 1
& 0.9924 & 0.8264
& 0.9934 & 0.9802 
& 0.9690 & 0.8381 \\
Seed 2
& 0.9999 & 0.8299 
& 0.9931 & 0.9798 
& 0.9691 & 0.8346 \\
Seed 3    
& 0.9999 & 0.8229
& 0.9934 & 0.9806
& 0.9629 & 0.8249 \\

\midrule

Mean & 0.9974 & 0.8264 & 0.9933 & 0.9802 & 0.9670 & 0.8325\\
Standard Deviation & 0.0043&  0.0035 &    0.0002 & 0.0004 & 0.0036 & 0.0068\\

\bottomrule
\end{tabular}

\end{small}
\end{threeparttable}
\vspace{-10pt}
\end{table}

\subsection{Computation Costs}
All experiments are conducted on an NVIDIA A100 GPU. As summarized in Table \ref{tab:gpu_hours}, the pre-training stage consumes 38 GPU-hours, while fine-tuning on downstream datasets requires 17.7 GPU-hours on MFCAD++, 6.8 GPU-hours on Fusion360Seg, 1.0 GPU-hour on FabWave, and 2.2 GPU-hours on TMCAD.

\begin{table}[h]
\centering
\small
\setlength{\tabcolsep}{16pt}
\vspace{5pt}
\caption{GPU-hour statistics for pre-training and fine-tuning across different datasets.}
\label{tab:gpu_hours}
\begin{tabular}{@{\extracolsep{\fill}}l|c|c|c|c|c}
\toprule
Stage 
& Pre-training (250k)
& MFCAD++
& Fusion360Seg
& FabWave
& TMCAD \\
\midrule
GPU-hours (A100 $\times$ h)
& 38
& 17.7
& 6.8
& 1.0
& 2.2 \\
\bottomrule
\end{tabular}
\vspace{-10pt}
\end{table}

\section{Geometric Foundations for B-reps}

\label{sec:geodetail}

In this section, we provide more details about B\'ezier, B-splines and NURBS\cite{piegl2012nurbs}.

\paragraph{B\'ezier}
B\'ezier curves are the most fundamental parametric curves, defined by a set of control points $\mathbf{P}_i$.
A degree-$n$ B\'ezier curve is expressed as a linear combination of Bernstein basis polynomials:
\begin{equation}
    \mathbf{C}(u) = \sum_{i=0}^{n} B_{i,n}(u) \mathbf{P}_i, \quad u \in [0,1]
\end{equation}
where the Bernstein basis polynomial is $B_{i,n}(u) = \binom{n}{i} u^i (1-u)^{n-i}$. B\'ezier surfaces extend B\'ezier curves to two parameters $(u,v)$ with a control net $\{\mathbf{P}_{i,j}\}$. A tensor-product B\'ezier surface of bi-degree $(n,m)$ is defined as
\begin{equation}
    \mathbf{S}(u,v)=\sum_{i=0}^{n}\sum_{j=0}^{m} B_{i,n}(u)\,B_{j,m}(v)\,\mathbf{P}_{i,j},\quad (u,v)\in[0,1]^2.
\end{equation}
However, the basis functions have global support over $[0,1]$, so perturbing a single control point generally affects the entire entitiy. Moreover, being polynomial, they cannot exactly represent conic sections (e.g., circles, ellipses) in Euclidean space.

\paragraph{B-Splines}
B-splines generalize B\'ezier representations by introducing a knot vector
$U=\{u_0,\dots,u_m\}$ and the associated basis functions $N_{i,p}(u)$ of degree $p$.
Given degree $p$, control points $\{\mathbf{P}_i\}_{i=0}^{n}$, and $m=n+p+1$, a B-spline curve is
\begin{equation}
    \mathbf{C}(u)=\sum_{i=0}^{n} N_{i,p}(u)\,\mathbf{P}_i.
\end{equation}
The basis functions are defined by the Cox--de Boor recursion:
\begin{equation}
    N_{i,0}(u)=
    \begin{cases}
        1, & u_i \le u < u_{i+1},\\
        0, & \text{otherwise},
    \end{cases}
\end{equation}
\begin{equation}
    N_{i,p}(u)=
    \frac{u-u_i}{u_{i+p}-u_i}N_{i,p-1}(u)
    +\frac{u_{i+p+1}-u}{u_{i+p+1}-u_{i+1}}N_{i+1,p-1}(u).
\end{equation}
A degree-$p$ B\'ezier curve is a special case of a B-spline curve with no internal knots and
end knots of multiplicity $(p+1)$.
Modifying a control point affects the curve only within the support of the corresponding basis functions, making B-splines highly suitable for modeling complex shapes. B-spline surfaces are obtained by extending the above basis construction to two parameters via a tensor-product of basis functions defined on two knot vectors. Consequently, they inherit the same locality and flexibility: modifying a control point in the control net influences only a local region of the surface in parameter space. A single Bézier surface patch can be viewed as a special case of a B-spline surface whose knot vectors contain no internal knots and use end knots with multiplicity $(p+1)$ and $(q+1)$, respectively.

\paragraph{NURBS}
NURBS further extend B-splines by introducing weights $w_i$, enabling exact representation of conic sections (e.g., circles and ellipses) and related surfaces (e.g., cylinders).
A B-spline is a special case of NURBS with $w_i=1$ for all $i$.
Given degree $p$, knot vector $U$, control points $\{\mathbf{P}_i\}_{i=0}^{n}$, and weights $\{w_i\}_{i=0}^{n}$,
a NURBS curve is
\begin{equation}
\mathbf{C}(u)=
\frac{\sum_{i=0}^{n} N_{i,p}(u)\,w_i\,\mathbf{P}_i}
     {\sum_{i=0}^{n} N_{i,p}(u)\,w_i}.
\end{equation}
To linearize geometric operations, lift control points to homogeneous coordinates
$\mathbf{P}_i^{w}=(w_i x_i,\, w_i y_i,\, w_i z_i,\, w_i)\in\mathbb{R}^4$.
In homogeneous space, many NURBS operations reduce to standard B-spline operations on $\{\mathbf{P}_i^{w}\}$. In this space, NURBS operations degenerate into standard B-Spline operations, greatly simplifying algorithmic processing.

\section{Details on B\'ezier Decomposition}

To bridge the gap between complex NURBS geometric entities in B-rep models
and the structured input requirements of transformer models (i.e., fixed-length sequences of control points),
we follow the approach of BRT to perform a standardized decomposition for both edges and faces \cite{zou2025bringing}. Moreover, we analyze why this decomposition can be near-lossless.

\subsection{Decomposition Methods}

Building upon the homogeneous representation, we now detail the decomposition process for edges and faces. This standardization step is crucial for converting variable-length NURBS data to fixed-topology tensor inputs.

\paragraph{Edge Decomposition}

Each edge in the B-rep model is represented as a degree-$p$ NURBS curve with knot vector $U$.
Since the length of the knot vector varies, we employ \emph{Boehm's knot insertion algorithm}~\cite{boehm1980inserting} to decompose it into independent B\'ezier segments.
Specifically, for each internal knot $u_k$ with multiplicity $s$, if $s < p$, we insert $u_k$ for $(p-s)$ times until its multiplicity reaches $p$.
Knot insertion changes the basis functions in the parameter domain; to keep the curve shape $\mathbf{C}(u)$ invariant, the (homogeneous) control points must be updated accordingly.
Inserting a knot $\hat{u}\in[u_k,u_{k+1})$ yields a refined set of homogeneous control points $\{\mathbf{Q}_i^{w}\}$ computed by linear interpolation:
\begin{equation}
\mathbf{Q}_i^{w}=(1-\alpha_i)\mathbf{P}_{i-1}^{w}+\alpha_i\mathbf{P}_{i}^{w},
\quad
\alpha_i=\frac{\hat{u}-u_i}{u_{i+p}-u_i}.
\end{equation}
After recursively inserting knots, the refined B-spline can be segmented into multiple B\'ezier curves, where each segment is determined by exactly $(p+1)$ control points, producing fixed-length tensor inputs for neural network processing.

\paragraph{Face Decomposition}

For face entities, the geometry is a tensor-product NURBS surface $\mathbf{S}(u,v)$.
We apply the following three-stage pipeline to unify the primitive representation.

\textbf{Step 1: Generation of B\'ezier Rectangles}
Similarly to edge decomposition, we apply knot insertion independently along both the $u$ and $v$ directions
until every \emph{internal} knot reaches multiplicity $p$ and $q$, respectively (degrees along $u$ and $v$).
This decomposes the surface into a grid of tensor-product B\'ezier patches in homogeneous space.
For each patch, the homogeneous surface can be written as
\begin{equation}
    \mathbf{S}_{\text{rect}}^{w}(u,v)
    = \sum_{i=0}^{p}\sum_{j=0}^{q} B_i^{p}(u)\,B_j^{q}(v)\,\mathbf{P}^{w}_{i,j},
    \qquad (u,v)\in[0,1]^2,
\end{equation}
where $B_i^{p}(\cdot)$ and $B_j^{q}(\cdot)$ are univariate Bernstein basis polynomials. Here we place the degree as a superscript for convenience in the following explanation.

\textbf{Step 2: Transformation to B\'ezier Triangles}
To unify the input format for the neural network, each tensor-product B\'ezier patch over $[0,1]^2$
is split along the diagonal $u+v=1$ into two triangular patches \cite{farin1986triangular}.
For the \emph{lower} triangle $\Delta=\{(u,v)\mid u\ge0,\ v\ge0,\ u+v\le 1\}$, define the barycentric coordinates
$t=1-u-v$.
A triangular B\'ezier surface of total degree $d$ is written as
\begin{equation}
    \mathbf{T}^{w}(u,v)
    = \sum_{i\ge0,\ j\ge0,\ i+j\le d} B_{i,j}^{d}(u,v)\,\mathbf{V}^{w}_{i,j},
\end{equation}
where the bivariate Bernstein basis functions are defined as:
\begin{equation}
    B_{i,j}^{d}(u,v)
    = \frac{d!}{i!\,j!\,(d-i-j)!}\,u^{i}v^{j}t^{\,d-i-j}.
\end{equation}
Since a tensor-product polynomial of bi-degree $(p,q)$ contains monomials up to total degree $p+q$,
we set $d=p+q$ to capture the full geometry without information loss.

By expanding both polynomial bases and equating their coefficients, we derive a linear mapping that computes the new control points $\mathbf{V}^w$ directly from the original rectangular points $\mathbf{P}^w$.
The derivation proceeds as follows:
Substituting the identity $1-u = v+t$ and $1-v = u+t$ (where $t=1-u-v$) into the tensor-product Bernstein basis functions:
\begin{equation}
    B_a^{p}(u)\,B_b^{q}(v)
=\binom{p}{a}\binom{q}{b}\,u^{a}(v+t)^{p-a}\,v^{b}(u+t)^{q-b}
\end{equation}
then expand the binomials and collect terms with total degree $d=p+q$.
This yields the explicit conversion formula
\begin{equation}
\mathbf{V}^{w}_{i,j}
=
\sum_{a=0}^{p}\sum_{b=0}^{q}
\left[
\binom{p}{a}\binom{q}{b}\,
\binom{p-a}{\,j-b\,}\,
\binom{q-b}{\,i-a\,}\,
\frac{i!\,j!\,(d-i-j)!}{d!}
\right]\,
\mathbf{P}^{w}_{a,b},
\quad d=p+q,
\end{equation}
where $\binom{n}{k}=0$ if $k<0$ or $k>n$.

\textbf{Step 3: Quadtree Boundary Decomposition}

For trimmed surfaces, the valid geometry domain is bounded by curves defined in parametric space $(u,v)$.
To accurately capture irregular boundaries while maintaining efficiency, we employ an adaptive quadtree subdivision approach.
We recursively subdivide the $(u,v)$ domain and classify each rectangle based on its relationship to the trim boundary: interior rectangles are retained without subdivision, exterior rectangles are discarded, and boundary-crossing rectangles are further subdivided in parametric space until reaching the maximum depth or sufficient local refinement (e.g., chord-to-arc ratio).
All resulting valid leaf rectangles are converted into triangular patches following Step 2.
This produces a unified collection of triangular Bézier patches with adaptive resolution: coarse triangulation in smooth interior regions and fine triangulation near complex boundaries, optimally balancing geometric accuracy and computational efficiency.

\subsection{Why the Decomposition Is Near-lossless?}
\label{proof:near-lossless}
While the decomposition is mathematically exact for the interior of complete surfaces, handling trimmed boundaries necessitates the subdivision and approximation of cells intersecting the trimming curves. Consequently, the primary source of geometric deviation in our representation stems from this boundary discretization. In this section, we analyze the error induced by the quadtree boundary decomposition. Specifically, we evaluate the fidelity of our piecewise linear approximation and demonstrate that as the discretization is refined, the global approximation error vanishes at a quadratic rate.

This result follows directly from standard interpolation error analysis in numerical methods, and we simply adapt the classical proof strategy to our boundary discretization setting~\cite{gautschi2011numerical,ascher2011first}. This theoretical guarantee ensures that the discrete representation remains near-lossless for our experiments.

\paragraph{Problem Setup}

Let $\mathcal{C}(t): [a, b] \to \Omega \subseteq \mathbb{R}^2$ be a regular $C^2$ continuous curve representing a trimming boundary in the parametric domain. 
The \emph{quadtree boundary decomposition} adaptively partitions the parameter interval $[a, b]$ into $N$ small segments $\{[t_i, t_{i+1}]\}_{i=1}^N$. To formalize the error analysis, we denote the parametric step size of each subdivided segment as $h_i = t_{i+1} - t_i$, with $h = \max_{i} \{h_i\}$ representing the \emph{maximum step size}. Within each parametric interval $[t_i, t_{i+1}]$, the continuous trim curve $\mathcal{C}(t)$ is approximated by the first-order Lagrange interpolant $\mathcal{L}_i(t)$. Given the total arc length of the boundary curve $L = \int_{a}^{b} \|\mathcal{C}'(t)\| dt$, we can evaluate the fidelity of this approximation through the root mean square  error~(RMSE), defined as: $E_{RMSE} = \sqrt{\frac{1}{L} \int_{a}^{b} \| \mathcal{C}(t) - \mathcal{L}(t) \|^2 dt}$.

\vspace{2pt}
\begin{theorem}[\textbf{Quadratic convergence for boundary approximation}]
For a $C^2$ continuous boundary curve $\mathcal{C}$ discretized via the first-order Lagrange interpolant $\mathcal{L}_i(t)$ with a maximum step size $h$, the RMSE $E_{RMSE}$ satisfies:
\begin{equation}
    E_{RMSE} \le O(h^2) \quad \text{as} \quad h \to 0
\end{equation}
This indicates that the approximation error vanishes quadratically with respect to the maximum step size.
\end{theorem}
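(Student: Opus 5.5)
The plan is the classical interpolation-error argument for piecewise linear interpolation, applied segment by segment and then integrated. Because $\mathcal{C}=(\mathcal{C}_1,\mathcal{C}_2)$ takes values in $\mathbb{R}^2$, I will work componentwise. The reason is that the scalar remainder formula with a single intermediate point $\xi$ does not hold for vector-valued functions; it does hold for each coordinate separately. Set $M_2 = \max_{k\in\{1,2\}}\max_{t\in[a,b]} |\mathcal{C}_k''(t)|$. This is finite because $\mathcal{C}\in C^2$ on the compact interval $[a,b]$.

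\textbf{Step 1 (local pointwise bound).} Fix a segment $[t_i,t_{i+1}]$ and a point $t$ in its interior. Define the auxiliary function
\[
g(s)=\mathcal{C}_k(s)-\mathcal{L}_{i,k}(s)-\lambda (s-t_i)(s-t_{i+1}),
\]
where $\lambda$ is chosen so that $g(t)=0$. Then $g$ vanishes at $t_i$, $t$, and $t_{i+1}$. Applying Rolle's theorem twice gives a point $\xi$ with $g''(\xi)=0$. Since $\mathcal{L}_{i,k}$ is linear, its second derivative is zero, and this yields the remainder formula
\[
\mathcal{C}_k(t)-\mathcal{L}_{i,k}(t)=\tfrac12\mathcal{C}_k''(\xi)(t-t_i)(t-t_{i+1}).
\]
The product $|(t-t_i)(t-t_{i+1})|$ is at most $h_i^2/4$. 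Combining the two coordinates therefore gives
\[
\|\mathcal{C}(t)-\mathcal{L}_i(t)\|\le \frac{\sqrt2}{8}M_2 h_i^2\le \frac{\sqrt2}{8}M_2 h^2 ,
\]
uniformly in $t$ and $i$. At the nodes themselves the error is exactly zero, since $\mathcal{L}_i$ interpolates $\mathcal{C}$ there.

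\textbf{Step 2 (integrate).} Let $\mathcal{L}$ denote the global piecewise interpolant, equal to $\mathcal{L}_i$ on $[t_i,t_{i+1}]$. Squaring the Step 1 bound and summing the integrals over all segments gives
\[
\int_a^b\|\mathcal{C}-\mathcal{L}\|^2\,dt\le (b-a)\,\frac{M_2^2}{32}\,h^4 .
\]
Substituting this into the definition of $E_{RMSE}$ yields
\[
E_{RMSE}\le \sqrt{\frac{b-a}{L}}\;\frac{M_2}{4\sqrt2}\;h^2=O(h^2).
\]

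\textbf{Main obstacle.} The only delicate point is making sure the constant does not degenerate. The prefactor $\sqrt{(b-a)/L}$ is finite and fixed, since regularity ($\|\mathcal{C}'\|\ge c>0$) gives $L\ge c(b-a)>0$. The constant $M_2$ depends only on $\mathcal{C}$ and not on the partition. The adaptivity of the quadtree is harmless here, because the bound involves only the maximum step $h$ and never the number of segments $N$. Together these give the quadratic rate stated in the theorem as $h\to0$.
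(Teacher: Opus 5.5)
Your proof is correct and follows the same skeleton as the paper's: a pointwise $O(h^2)$ chord-error bound on each segment, followed by squaring, integrating, normalizing by $L$, and taking the square root.

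The one step where you genuinely diverge is the summation, and there your version is stronger. The paper replaces $h_i$ by $h$ to bound each segment's squared error by $O(h^5)$, then multiplies by the number of segments. That forces it to \emph{assume} the partition is quasi-uniform with $N\propto L/h$. This hypothesis is not in the theorem, and it is not natural for an adaptive quadtree that refines near high curvature. You instead keep the factor $h_i$ and use $\sum_i h_i=b-a$, so your bound holds for arbitrary partitions, with the explicit constant $\sqrt{(b-a)/L}\,M_2/(4\sqrt2)$.

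Your componentwise Rolle argument loses a factor $\sqrt2$ relative to the paper's cited bound $\frac{h_i^2}{8}\sup\|\mathcal{C}''\|$. That vector-valued bound does hold, not via a single intermediate point but via the integral (Peano-kernel) form of the remainder, whose kernel has constant sign. Either way, the constant is immaterial for the $O(h^2)$ rate.
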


\begin{proof}
The proof proceeds by analyzing the error propagation from local interpolants to the global  deviation.

According to the error formula for first-order Lagrange interpolation of a $C^2$ continuous function, the pointwise Euclidean distance between the curve $\mathcal{C}(t)$ and the linear chord $\mathcal{L}_i(t)$ on any interval $[t_i, t_{i+1}]$ is bounded by \cite{gautschi2011numerical},
\begin{equation}
    \| \mathcal{C}(t) - \mathcal{L}_i(t) \| \le \frac{h_i^2}{8} \sup_{\xi \in [t_i, t_{i+1}]} \| \mathcal{C}''(\xi) \| \le O(h^2)
\end{equation}

Therefore, the squared error contribution over a single segment is obtained by integrating the squared pointwise deviation,
\begin{equation}
    \int_{t_i}^{t_{i+1}} \| \mathcal{C}(t) - \mathcal{L}_i(t) \|^2 dt \le \int_{t_i}^{t_{i+1}} (O(h^2))^2 dt = \int_{t_i}^{t_{i+1}} O(h^4) dt = O(h^5)
\end{equation}

Assuming the segments are quasi-uniform, the total number of segments $N$ is proportional to $L/h$. We sum the contributions across all $N$ segments and apply the definition of $E_{RMSE}$,
\begin{equation}
    E_{RMSE}^2 = \frac{1}{L} \sum_{i=1}^N \int_{t_i}^{t_{i+1}} \| \mathcal{C}(t) - \mathcal{L}_i(t) \|^2 dt \le \frac{1}{L} \cdot \left( \frac{L}{h} \cdot O(h^5) \right) = O(h^4)
\end{equation}

Taking the square root of the normalized sum yields,
\begin{equation}
    E_{RMSE} \le \sqrt{O(h^4)} = O(h^2)
\end{equation}
This confirms that the RMSE converges to zero at a \emph{quadratic rate} with respect to maximum step size $h$.
\end{proof}

\paragraph{Practical Implementation}
In practical implementations, employing a fixed parametric step size $h$ as the refinement criterion presents significant limitations. Primarily, $h$ is an absolute quantity that lacks \emph{scale-invariance}. For instance, a fixed threshold may lead to insufficient geometric fidelity for micro-scale features while causing computational redundancy in large-scale components. 

To address this, we introduce an adaptive control mechanism based on a dimensionless constant, \emph{chord-to-arc ratio} $\tau$. It dynamically modulates the subdivision frequency according to the local curvature of the curve. By applying a standard Taylor expansion to a small curve segment with respect to the parametric step $h$, the relationship between the chord length and the arc length can be derived through their respective power series. For a curve with local curvature $\kappa$, the chord length is $h - \frac{1}{24}\kappa^2 h^3 + O(h^5)$ and the arc length is $h + O(h^3)$ \cite{piegl2012nurbs},
\begin{equation}
    \tau = \frac{\text{Chord}(h)}{\text{Arc}(h)} = 1 - \frac{\kappa^2 h^2}{24} + O(h^4).
\end{equation}

This formulation reveals that the deviation $(1 - \tau)$ is directly proportional to $\kappa^2 h^2$. Substituting this relationship into our convergence theorem, it follows that the error bound is effectively governed by $E_{RMSE} \le C \cdot (1 - \tau)$. Such a control logic not only aligns with the theoretical quadratic convergence of the discretization but also provides a scale-independent and \emph{error-controllable} metric, ensuring that \emph{Brep2Shape} captures complex industrial geometries with near-lossless precision.

Overall, the above inequality demonstrates that by prescribing the threshold $\tau$ sufficiently close to 1, the global approximation error can be sufficiently small. This confirms that our discretization scheme is \emph{error-controllable}, theoretically allowing the approximation to approach the original geometry with arbitrary precision. However, in practice, a finer discretization (higher $\tau$) imposes a heavier computational burden on both pre-processing and subsequent learning. To achieve a balance between geometric fidelity and computational efficiency, we empirically set $\tau = 0.995$. This configuration effectively maintains high-precision boundary representation while ensuring rapid convergence and inference performance in our experiments.

\section{Additional Visualizations}
\label{sec:additiona_visualizations}
In this section, we provide additional qualitative results as a supplement to Figure~\ref{fig:vis_main}.
Specifically, we show more segmentation examples on Fusion360Seg in Figure ~\ref{fig:show_case_seg} and more classification examples on TMCAD in Figure ~\ref{fig:show_case_cls}.

\begin{figure*}[h]
\begin{center}
\centerline{\includegraphics[width=0.75\textwidth]{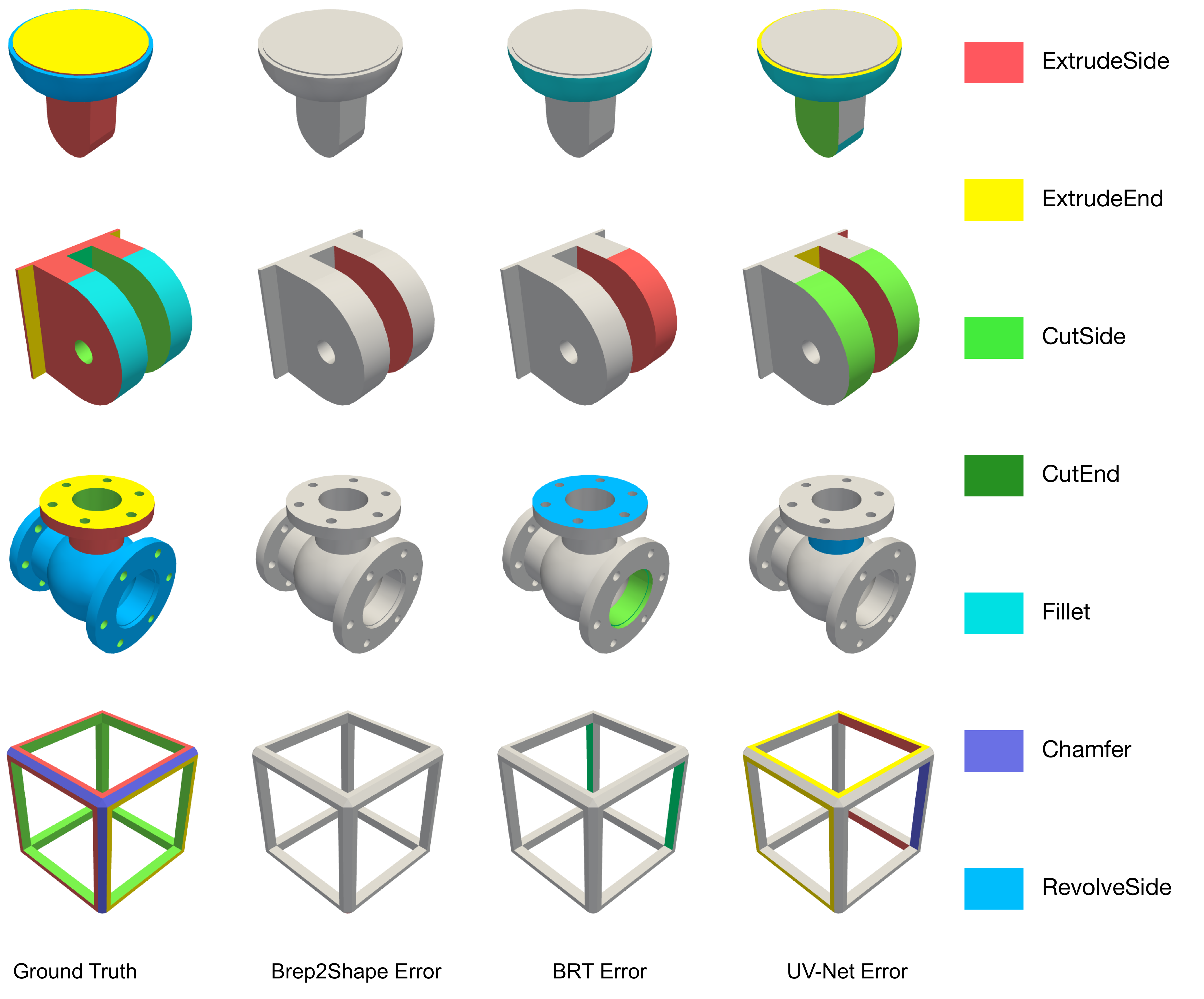}}
    \caption{Additional visualizations for segmentation on Fusion360Seg.}
    \label{fig:show_case_seg}
\end{center}
\vspace{-20pt}
\end{figure*}

\begin{figure*}[h]
\begin{center}
\centerline{\includegraphics[width=0.75\textwidth]{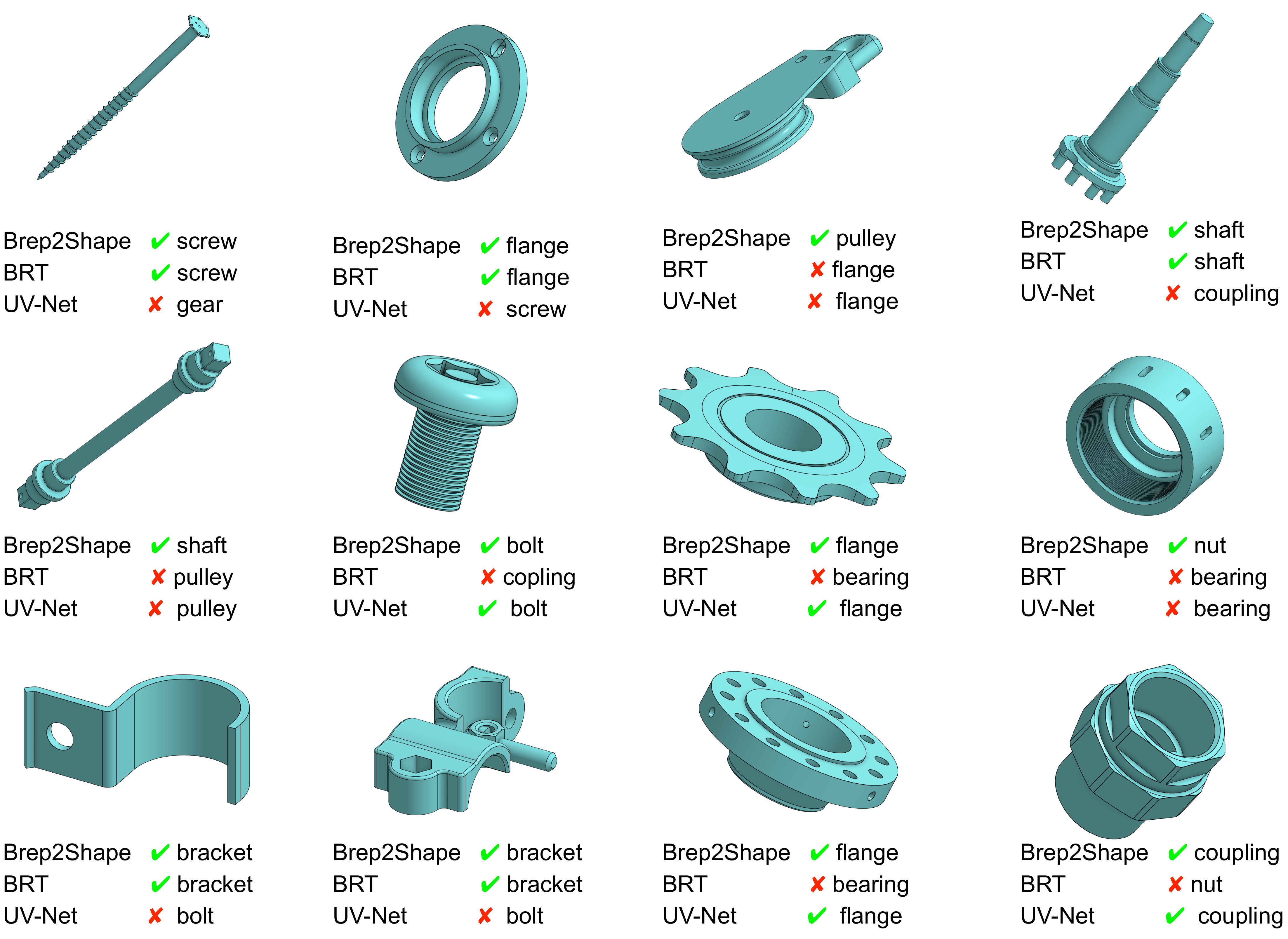}}
    \caption{Additional visualizations for classification on TMCAD.}
    \label{fig:show_case_cls}
\end{center}
\vspace{-20pt}
\end{figure*}

\section{Limitations and Future Work}
Despite its effectiveness, {Brep2Shape} has some limitations that provide avenues for future research. 

\textbf{(i)} {Brep2Shape} balances mathematical continuity and model compatibility via standardized B\'ezier decomposition, satisfying most industrial CAD requirements. However, for \emph{ultra-precision applications} (e.g., aerospace manufacturing), fixed-order decomposition can introduce certain approximation biases. These residuals are negligible for representative learning but provide a focus for future optimization in these precision-critical domains.

\textbf{(ii)} While we demonstrate state-of-the-art results on several benchmarks, the {diversity of downstream tasks} \emph{in this domain} remains somewhat constrained. Due to the lack of established high-quality datasets for edge-level tasks (e.g., edge classification), our current evaluation primarily focuses on face-level and shape-level understanding. Enriching the domain with more diverse and comprehensive tasks would be a valuable next step. 

\textbf{(iii)} Industrial CAD designs often consist of complex {assemblies}. Extending {Brep2Shape} to handle multi-body interdependencies and joint constraints is a primary focus of our future work, aiming toward a truly holistic understanding of large-scale engineering systems.

\end{document}